\documentclass{article}

\usepackage{arxiv}

\usepackage[utf8]{inputenc} 
\usepackage[T1]{fontenc}    
\usepackage{hyperref}       
\usepackage{url}            
\usepackage{booktabs}       
\usepackage{amsfonts}       
\usepackage{nicefrac}       
\usepackage{microtype}      
\usepackage{lipsum}		
\usepackage{graphicx}
\usepackage{natbib}
\usepackage{doi}

\usepackage{subfigure}
\usepackage{amsmath}
\usepackage{amssymb}
\usepackage{mathtools}
\usepackage{amsthm}
\usepackage{soul}
\usepackage{float}

\theoremstyle{plain}
\newtheorem{theorem}{Theorem}

\newtheorem{definition}{Definition}
\newtheorem{assumption}{Assumption}
\theoremstyle{remark}

\title{Discovering Causality for Efficient Cooperation in Multi-Agent Environments}

\date{} 					

\author{{Rafael Pina}\thanks{Corresponding author}, \hspace{1mm} Varuna De Silva\hspace{1mm} and\hspace{1mm} Corentin Artaud \\
	Institute for Digital Technologies\\
	Loughborough University London\\
	London, United Kingdom \\
	\texttt{\{r.m.pina, v.d.de-silva, c.artaud2\}@lboro.ac.uk} \\
}



\hypersetup{
pdftitle={A template for the arxiv style},
pdfsubject={q-bio.NC, q-bio.QM},
pdfauthor={David S.~Hippocampus, Elias D.~Striatum},
pdfkeywords={First keyword, Second keyword, More},
}

\begin{document}
\maketitle

\begin{abstract}
In cooperative Multi-Agent Reinforcement Learning (MARL) agents are required to learn behaviours as a team to achieve a common goal. However, while learning a task, some agents may end up learning sub-optimal policies, not contributing to the objective of the team. Such agents are called lazy agents due to their non-cooperative behaviours that may arise from failing to understand whether they caused the rewards. As a consequence, we observe that the emergence of cooperative behaviours is not necessarily a byproduct of being able to solve a task as a team. In this paper, we investigate the applications of causality in MARL and how it can be applied in MARL to penalise these lazy agents. We observe that causality estimations can be used to improve the credit assignment to the agents and show how it can be leveraged to improve independent learning in MARL. Furthermore, we investigate how Amortized Causal Discovery can be used to automate causality detection within MARL environments. The results demonstrate that causality relations between individual observations and the team reward can be used to detect and punish lazy agents, making them develop more intelligent behaviours. This results in improvements not only in the overall performances of the team but also in their individual capabilities. In addition, results show that Amortized Causal Discovery can be used efficiently to find causal relations in MARL.
\end{abstract}

\keywords{Multi-Agent Reinforcement Learning \and Multi-Agent Cooperation \and Causality Estimations \and Deep Learning}

\section{Introduction}\label{sec:intro}
Multi-Agent Reinforcement Learning (MARL) studies the behaviours of groups of agents that interact in a system. In cooperative MARL a team of agents is trained to learn strategies to solve tasks that consist of accomplishing a certain common goal as a team \cite{gupta_cooperative_2017}. 

Recently, one of the big areas of interest in the field of MARL is the study and development of value function factorisation methods \cite{sunehag_value-decomposition_2018,son_qtran_2019,wang_qplex_2021}. The ultimate objective of these methods is to learn a joint Q-function that can be efficiently decomposed into a set of agent-wise Q-values. These methods follow a paradigm known as centralised training decentralised execution (CTDE) \cite{olienoek_2008,KRAEMER201682}. This paradigm allows the agents to have access to the full state of the environment during training, but restricts them to their local observations during execution. This creates a reasonable balance between two inherent problems in MARL: the exponential growth of the action space and the non-stationarity \cite{de_witt_multi-agent_2020}. However, although this strategy has enabled to mitigate important problems in MARL, there are still some challenges that may arise \cite{neary_reward_2021,georgios_non_stat_2019}. One problem known as the lazy agent pathology can happen when the credit assignment to the agents is not correct and some of the agents of the team become lazy \cite{sunehag_value-decomposition_2018}. As the name suggests, lazy agents are agents that do not cooperate towards the common goal of the team due to learning sub-optimal policies. For instance, in the considered scenarios where the rewards received correspond to the performance of the team, every agent receives the same reward, regardless of their individual performances. An agent that becomes lazy will not contribute to the goal of the team but still receives the reward for the performance of the others. Fig. \ref{fig:causal_illustration} depicts an example where only a part of the team is responsible for the team reward but all the agents get credit.

In this paper we intend to detect and punish lazy agents in MARL by taking advantage of causality estimations. While causal relations have been widely explored in fields such as econometrics or in the analysis of time series data, studies argue that causality can be valuable to other areas in machine learning \cite{peters_2017_elements}. We intend to use causal detection to tackle the credit assignment problem in MARL and eliminate lazy agents. In this sense, we observe that causality detection can be leveraged to improve independent learning in MARL. Thus, in this work we opt to consider each agent as an independent unit where each one of them is controlled by a distinct neural network (do not use parameter sharing \cite{gupta_cooperative_2017}).

With this work, we aim to bridge the concepts of temporal causality and MARL. We demonstrate that if we can detect causal relations between individual observations and rewards in MARL we can address the problem of lazy agents. To this end, we introduce Independent Causal Learning (ICL) and show that using causality detection can be beneficial for MARL in a set of diverse environments with distinct objectives. Furthermore, we demonstrate how individual agents develop more cooperative and more intelligent behaviours when compared to simple independent learners. At the same time, we show that state-of-the-art causality detection techniques can be applied to MARL, enhancing the relevance of studying causality relations in MARL. We focus specifically on a causality detection method called Amortized Causal Discovery (ACD) \cite{lowe_amortized_2022} to introduce ACD-MARL and show that causal relations in multi-agent systems between individual observations and the team reward can be detected by ACD-MARL and used to train more specialized agents in MARL.

\section{Related Work}\label{sec:rel_work}
One of the first methods for learning Reinforcement Learning policies in multi-agent settings was introduced by \cite{tan_multi-agent_1993} as independent learners using Q-learning \cite{watkins_technical_1992}. Recently, multiple works have focused instead on value function factorisation methods \cite{rashid_qmix_2018,sunehag_value-decomposition_2018,son_qtran_2019} that aim to mitigate the credit assignment problem by learning an efficient factorisation of a joint Q-function into individual values for each agent, capable of incorporating the relative contributions of individual agents for the team objective \cite{liir_du_2019}. Other works such as \cite{foerster_counterfactual_2018} tackle the credit assignment problem using an advantage function computed by the critic of an actor-critic method. An important common aspect among these methods is that they all operate under the CTDE convention. Although such approaches have proved to be useful, centralisation implied by CTDE may not be possible in certain scenarios \cite{canese_multi-agent_2021} and there can still arise lazy agents during learning \cite{sunehag_value-decomposition_2018}. From a slightly different perspective, reward shaping mechanisms are also popular to adjust the credit given to the agents \cite{aotani_bottom-up_2021,marek_2019,Wang_2020}. Generally, these methods learn to factorise or manipulate the rewards in a way that is more suitable for the agents to learn the tasks with a more precise credit assignment.

Within cooperative MARL, it is also relevant to mention the importance of communication-based methods. Works such as \cite{sukhbaatar_learning_2016,foerster_learning_2016,de_witt_multi-agent_2020} show how communication in MARL can be used to improve the performances of the agents. When compared to CTDE, communication methods can be seen as more realistic methods since they rely more on communication between entities rather than on a centralised oracle common to all agents. Furthermore, studies have shown that CTDE can be detrimental in some cases \cite{scaling-marl-filippos-2021}.
\begin{figure}[!t]
    \centering
     \subfigure[$t=k$]{\label{fig:lj_caus_1}\includegraphics[width=0.20\textwidth]{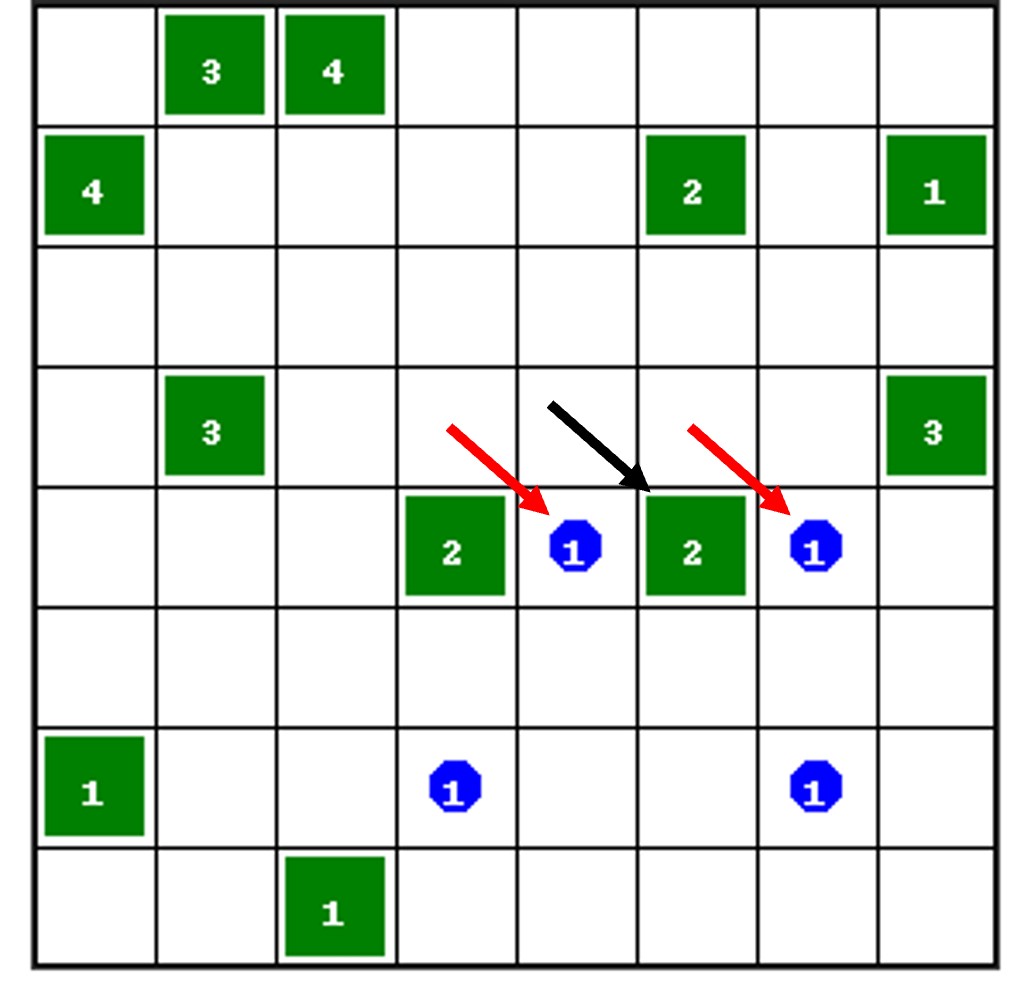}}
    \hspace{3.00mm}
    \subfigure[$t=k+1$]{\label{fig:lj_caus_2}\includegraphics[width=0.20\textwidth]{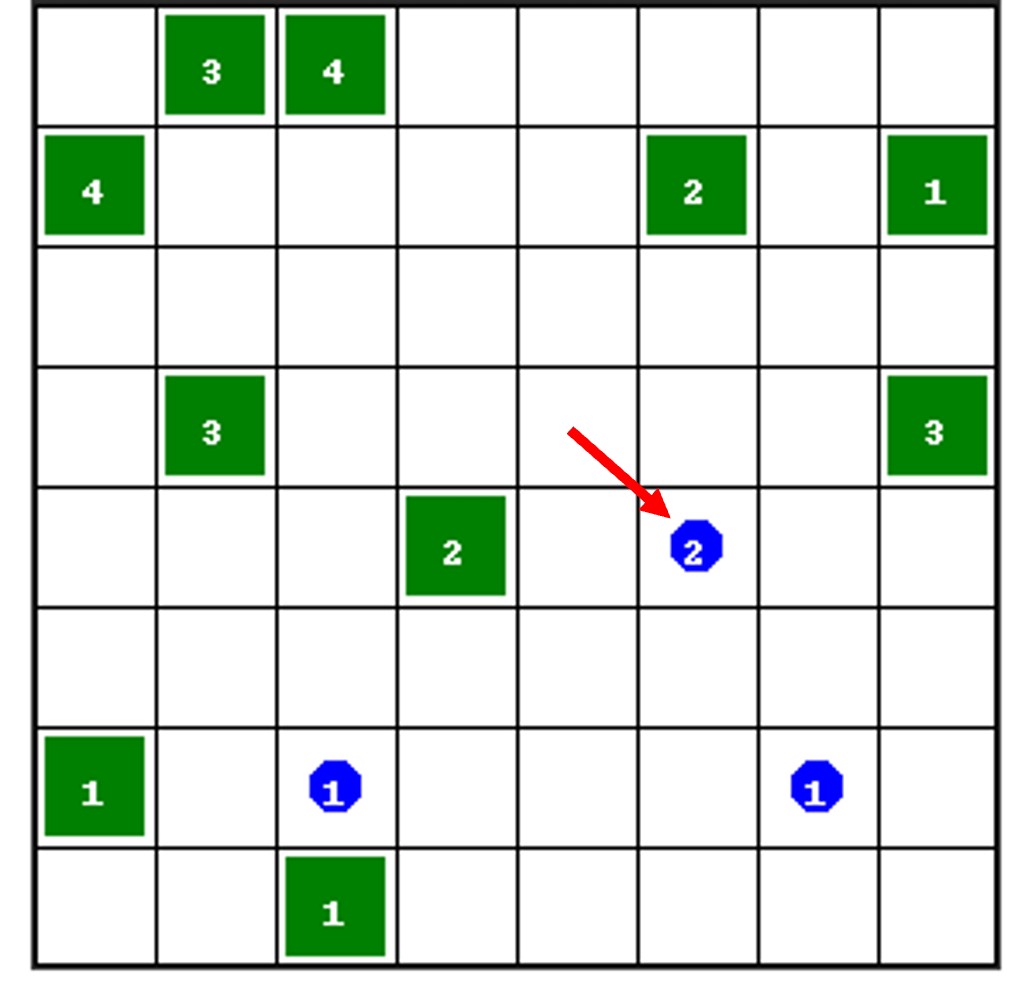}}
    \hspace{3.00mm}
    \subfigure[$t=k+2$]{\label{fig:lj_caus_3}\includegraphics[width=0.20\textwidth]{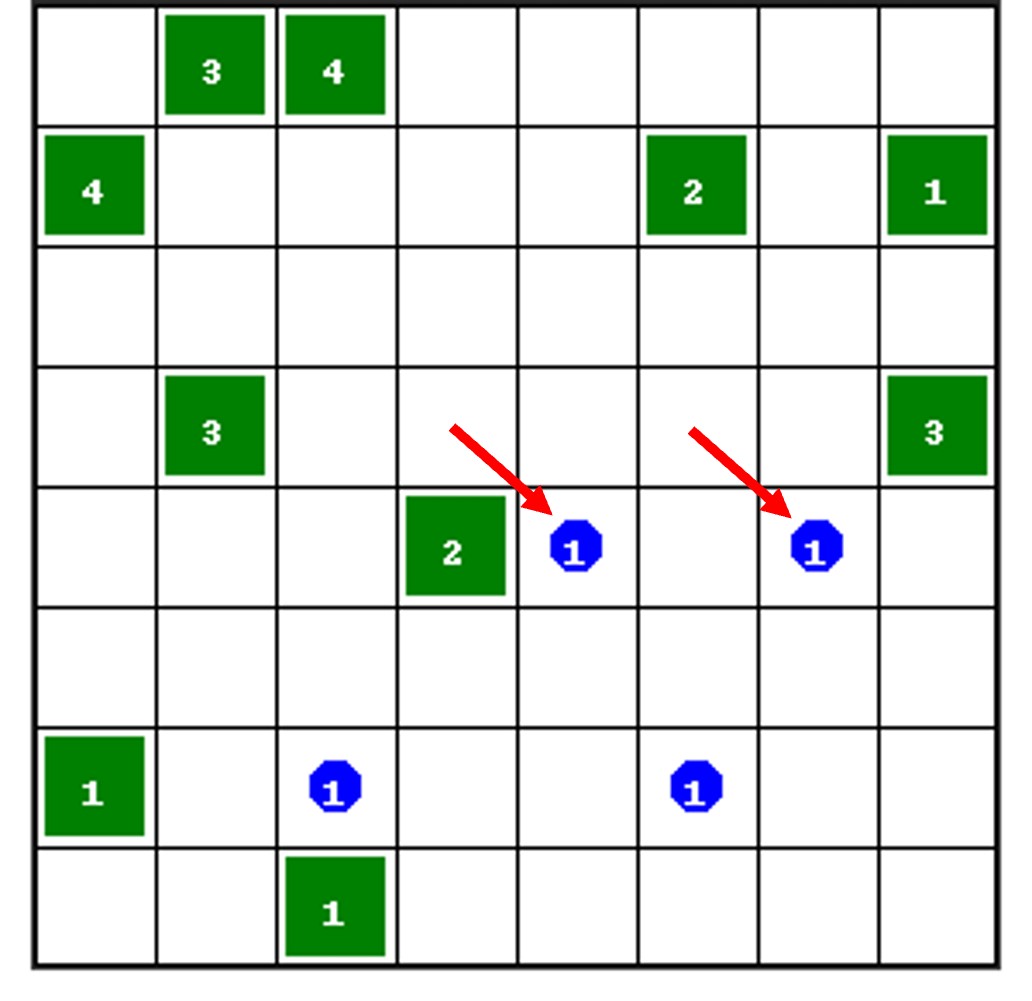}}
    \caption{An example where only two agents (red arrows) out of four are responsible for the team reward after removing the green square (black arrow). However, in MARL all the agents will be rewarded for the performance of the team. The number inside the blue circle represents the number of agents in the corresponding cell, at a timestep $t$.}
    \label{fig:causal_illustration}
\end{figure}

Comparing to the literature, we aim to tackle the credit assignment problem from a different angle, using causality detection for that end. The concept of causality has been around for some time. It was first introduced by \cite{granger_1969} as a metric to detect whether knowing the values of a given time series would help to predict a second time series. While this term has been widely used mostly in fields such as statistics and econometrics \cite{Guo2010,Seth3293,Khanna2020Economy}, causality has been growing as an area of interest in several fields. For instance, causality estimations have been applied in fields such as physics and human motion \cite{kipf_neural_2018}, or to neurology to understand how different parts of the brain interact with each other \cite{glymour_review_caus_2019}. Multiple machine learning works have also proposed different approaches based on encoder-decoder architectures to detect causality relations in time series data \cite{Zhu2020Causal,huang_causal_rl}.

With this work we aim to bridge the concepts of causality and MARL and enhance how important it can be to understand the causal relations in the underlying dynamics of multi-agent systems. Additionally, we observe that causality detection can be used to improve the cooperative behaviours of independent learners.

\section{Preliminaries}\label{sec:backg}
\subsection{Decentralised Partially Observable Markov Decision Processes (Dec-POMDPs)}\label{sec:decpomdp}
We consider fully cooperative multi-agent tasks modelled as Dec-POMDPs \cite{oliehoek_a_concise_2016}. A Dec-POMDP can be represented as a tuple $G=\langle S,A,O,Z,P,r,\gamma,N\rangle$ where the global state of the environment is denoted by $s\in S$. At each timestep, each agent $i:i\in\mathcal{N}\equiv\{1,\ldots,N\}$ chooses an action $a_i\in A$, where $A$ is the action space, forming a joint action $a=\{a_1,\ldots,a_N\}$. From this process it results a reward $r(s,a):S\times A\rightarrow\mathbb{R}$ shared by the team and followed by a transition on the state of the environment according to the probability $P(s'|s,a):S\times A\times S\rightarrow [0,1]$, where $s'$ represents the next state and $P$ models the dynamics of the environment. In a partially observable setting, at each state $s$ each agent $i$ has an individual observation $o_i\in O(s,i):S\times \mathcal{N}\rightarrow Z$ and keeps an action-observation history $\tau_i\in \mathcal{T}=\{\tau_1,\ldots,\tau_N\}$, where $\mathcal{T}$ denotes the set of action-observation histories. Let $\gamma:\gamma \in[0,1]$ represent the discount factor. The goal is to maximise the discounted return at a given timestep $t$, defined as $R_i^t=\sum_{k=0}^\infty\gamma_kr_{t+k}$ for each agent $i$, considering $k$ timesteps ahead of timestep $t$. Although the $S$ component makes part of the Dec-POMDP, in the approach proposed in this paper the agents use only their individual observations $o_i$ and the reward $r$, and do not have access to the full state $s$ of the environment at any moment. In other words, the policy $\pi_i$ of each agent only conditions on their local observations $o_i$.

\subsection{Independent Deep Q-Learning (IDQL)}\label{sec:idql}
The most straight forward method used for MARL is Independent Q-learning (IQL) \cite{tan_multi-agent_1993}. This approach applies the basics of Q-learning \cite{watkins_technical_1992} for independent learners that update their Q-functions individually with a learning rate $\alpha$ following the rule
\begin{equation}\label{eq:q_up}
    Q(s,a)=(1-\alpha)Q(s,a)+\alpha\left[r+\gamma\mathop{\mathrm{max}}_{a'}Q(s',a')\right]
\end{equation}
Motivated by the principles of Q-learning, \cite{mnih_human-level_2015} introduce Deep Q-Networks (DQNs). The key concept of this method is to combine deep neural networks with Q-learning, leading the agents to approximate their Q-functions as deep neural networks instead of simple lookup tables. This approach introduces the use of a replay buffer that stores past experiences, and a target network that stabilizes learning. The DQN is updated in order to minimize the loss \cite{mnih_human-level_2015}
\begin{equation}\label{eq:dqn_loss}
    \mathcal{L}(\theta)=\mathbb{E}_{b\sim B}\left[\big(r+\gamma\mathop{\mathrm{max}}_{a'}Q(s',a';\theta^-)-Q(s,a;\theta)\big)^2\right]
\end{equation}
where $\theta$ and $\theta^-$ are the parameters of the Q-network and a target Q-network, respectively, for a certain experience sample $b$ sampled from a replay buffer $B$. \cite{tampuu_multiagent_2015} put together the concepts of IQL and DQN to create independent learners that use DQNs. Throughout this paper, we refer to this combination as IDQL. We use it to benchmark the proposed approach and do not use the parameter sharing convention, allowing each agent to be controlled by an independent network.

\subsection{Granger Causality in Time Series}\label{sec:te}
\cite{granger_1969} introduced for the first time the concept of causality. In simple terms, saying that a given time series $X$ Granger-causes a second time series $Y$ means that by knowing the previous values of $X$ it is possible to get a better prediction of $Y$ than by using only the previous values of $Y$. Causality and correlation are two concepts that are often misunderstood. Given two time series, while they might be correlated, one may not necessarily have an impact on the other \cite{rohrer_thinking_2018}. Prompted by the advances on Granger Causality, we use these notions to formalise the link between causality and MARL scenarios in the sections ahead. Based on \cite{Tank_2021,lowe_amortized_2022}, we start by stating the formal definition of Granger Causality for non-linear systems: 
\begin{definition}\label{def:te}
Given a set of $n$ time series $X=\{x_1,\dots,x_n\}$ and a non-linear function $g_i$ that maps a set of past values to the series $i$ such that, for a timestep $t$, 
\begin{equation}
 x_i^{t+1} = g_i(x_1^{\leq t},\dots,x_n^{\leq t}) + \epsilon_i^{t+1}
\end{equation}
we say that a series $j$ Granger-causes the series $i$ if $g_i$ depends on the past values of $x_j$. Formally, this means that $\exists x_j^{'\leq t} \neq x_j^{\leq t} : g_i(x_1^{\leq t},\dots, x_j^{'\leq t},\dots, x_n^{\leq t}) \neq g_i(x_1^{\leq t},\dots, x_j^{\leq t},\dots, x_n^{\leq t})$.
\end{definition}

\subsection{Amortized Causal Discovery (ACD)}\label{sec:acd}
In the field of causal discovery, ACD \cite{lowe_amortized_2022} is a powerful method that can infer causal relations across different samples of time series with shared dynamics but different underlying causal graphs. ACD takes a key assumption that states that there exists some function $g$ that describes the dynamics of a set of samples $x$ given their past observations and a certain causal graph $G$ that models the causal relations of the system (with some noise $\epsilon$), as per $x^{t+1}=g(x^{\leq t}, G)+\epsilon^{t+1}$.

To learn the causal relations that describe the dynamics of the given samples, ACD uses an encoder-decoder architecture. The encoder is based on a Graph Neural Network \cite{kipf2017semisupervised} and estimates the distribution that models the causal relations across the input time series. This means that the latent space of the model represents the edges of the estimated causal graph for a set of time series. The decoder predicts the next timesteps of the input series, given the previous values of the series up to a step $t$, and the causal dynamics modelled by the latent space. As a result, the loss function used for the learning problem is composed by a negative log-likelihood term for the reconstruction and a KL-Divergence term to a pre-defined prior $p$ (uniform categorical) to act as a regularizer, resulting in the variational lower bound $\mathcal{L}$:
\begin{equation}\label{eq:elbo_loss}
    \mathcal{L}=\mathbb{E}_{q_{\phi(z|x)}}[\log p_\theta(x|z)]-\text{KL}[q_\phi(z|x)||p(z)]
\end{equation}

An important remark is that ACD considers directed graphs instead of undirected, meaning that the relations being learned are causal relations instead of simple correlations between time series. For instance, a causal relation from a time series $X$ to a time series $Y$ has a different meaning than a causal relation from $Y$ to $X$. This means that the causal graphs will always be modelled as a non-symmetric matrix. ACD also considers the existence of no connections between pairs of nodes, blocking these from exchanging information in the graph. Hence, the predictions of the decoder for each time series will only be based on other series that are causal related to itself. For this reason, ACD is aligned with previous work in Granger Causality \cite{lowe_amortized_2022}.

\section{Methods}\label{sec:meth}
\subsection{Independent Causal Learning (ICL)}\label{sec:icmarl}
In this section we formally propose Independent Causal Learning (ICL), a causality detection-based method for MARL. The goal of this method is show that causality detection can be used to improve independent learning in cooperative tasks. At the same time, we create a ground truth for the next method ACD-MARL. The key for ICL is the use of an agent-wise causality factor $c_i$ that each agent $i:i\in\{1,\dots,N\}$ uses to adjust credit of the team reward to itself. Each agent should be able to understand whether it is helping the team to achieve the intended team goal of the task or not. This method also encourages the agents to learn only by themselves. This can be beneficial, since in many real scenarios it is often unfeasible to provide the agents with the full information of the environment \cite{canese_multi-agent_2021}. Hence, to learn optimal policies in such scenarios, the agents may be forced to rely only on their individual observations to understand whether they are performing well or not. To motivate the proposed method, we investigate the concept of temporal causality, commonly used in the context of time series, as stated in Definition \ref{def:te}. Building up from this definition, we can then create the bridge between causality and a MARL problem.
\begin{definition}\label{def:te_2}
Let $E$ represent a certain episode sampled from a replay buffer of experiences in MARL, denoting a set of time series of $N$ observations and rewards $E=\{o_1,\dots,o_N,r\}$. From Definition \ref{def:te}, given a non-linear function $g_r$ that maps a set of past values to the series $r$, for the set $E$ we can say that a series $o_i$ Granger-causes the series $r$ if $g_r$ depends on the past values of $o_i$. Formally, we can write $\exists o_i^{'\leq t} \neq o_i^{\leq t} : g_r(o_1^{\leq t},\dots, o_i^{'\leq t},\dots, o_N^{\leq t},r) \neq g_r(o_1^{\leq t},\dots, o_i^{\leq t},\dots, o_N^{\leq t},r)$.
\end{definition}
Definition \ref{def:te_2} bridges the concepts of causality and MARL and states the motivation that supports the existence of a causality relationship between observations and rewards when we see a MARL episode as a set of time series. In the experiments section we provide evidence for the existence of causal relations in MARL by showing how ACD can be used to detect these relations. Based on Definition \ref{def:te_2}, and under the following Assumption \ref{ass:assump_1}, we can now define Theorem \ref{theo:theo1} that shows the rule for the calculation of each one of the individual causality factors and how they are used in the learning problem, not affecting convergence. We first introduce the necessary assumption for the validity of Theorem \ref{theo:theo1} that summarizes the conditions for a valid estimation of the factor $c_i$:
\begin{assumption}\label{ass:assump_1}
\sloppy \textit{Given a set $E=\{o_1,\dots,o_N,r\}$ and for each agent-wise causality factor $c_i$, $c_i(o_i,r) = 1 \implies \exists o_i^{'\leq t}\neq o_i^{\leq t} : g_r(o_1^{\leq t},\dots, o_i^{'\leq t},\dots, o_N^{\leq t},r) \neq g_r(o_1^{\leq t},\dots, o_i^{\leq t},\dots, o_N^{\leq t},r)$ and $c_i(o_i,r) = 0 \implies \forall o_i^{'\leq t}\neq o_i^{\leq t} : g_r(o_1^{\leq t},\dots, o_i^{'\leq t},\dots, o_N^{\leq t},r)=g_r(o_1^{\leq t},\dots, o_i^{\leq t},\dots, o_N^{\leq t},r)$ (with terms as Definition \ref{def:te_2}).}
\end{assumption}

\begin{theorem}\label{theo:theo1}
For a certain MARL task with $N$ agents, if at a given timestep $t$ each agent $i:i \in \{1,\ldots,N\}$ calculates an individual binary causality factor $c_i$, under Assumption \ref{ass:assump_1}, where 
\begin{equation}\label{eq:theo_eq1}
c_i(o_i,r)=\left\{
\begin{array}{ll}
    1 & o_i\ causes\ r\\
    0 & \lnot\ o_i\ causes\ r
\end{array}, i \in \{1,\ldots,N\}
\right.    
\end{equation}
and $o_i$ and $r$ denote the individual observations and the team reward at that timestep for an episode $E$, respectively, and each individual $Q_i$ is updated following the rule
\begin{equation}\label{eq:theo_eq2}
Q_i(\tau_i,a_i)=(1-\alpha)Q_i(\tau_i,a_i)+\alpha\left[c_i(\tau_i,r)\times r+\gamma\mathop{\mathrm{max}}_{a_i'}Q_i(\tau_i',a_i')\right]
\end{equation}
where $\alpha$ is the learning rate, then the convergence of the learning Q-function is preserved.
\end{theorem}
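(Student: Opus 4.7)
The plan is to recognize that Eq.~(\ref{eq:theo_eq2}) is precisely the standard tabular Q-learning recursion applied to a \emph{modified} per-agent reward $\tilde r_i := c_i(\tau_i,r)\,r$, and then to invoke the classical Watkins--Dayan stochastic-approximation convergence theorem. In other words, the causality factor is absorbed into the reward signal, so the claim reduces to verifying that this new reward satisfies the hypotheses of the single-agent Q-learning result already used to justify IDQL in Section~\ref{sec:idql}.

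First I would verify that $\tilde r_i$ is a well-defined, deterministic, bounded function of the information available at the update step. Assumption~\ref{ass:assump_1} specifies $c_i \in \{0,1\}$ as a deterministic predicate on $(\tau_i,r)$, so $\tilde r_i$ is measurable with respect to the same $\sigma$-algebra as $r$, and $|\tilde r_i| \le |r|$ so it inherits boundedness from the original team reward. Next I would rewrite Eq.~(\ref{eq:theo_eq2}) in the equivalent form
\begin{equation*}
Q_i(\tau_i,a_i) \leftarrow (1-\alpha)Q_i(\tau_i,a_i) + \alpha\!\left[\tilde r_i + \gamma\max_{a_i'} Q_i(\tau_i',a_i')\right],
\end{equation*}
and introduce the associated Bellman operator $(\mathcal{T}_i Q)(\tau_i,a_i) = \mathbb{E}[\tilde r_i + \gamma\max_{a'}Q(\tau_i',a')]$. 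A routine sup-norm calculation shows $\mathcal{T}_i$ is a $\gamma$-contraction, so it admits a unique fixed point $Q_i^\star$. Finally, under the usual Robbins--Monro step-size conditions ($\sum_t \alpha_t=\infty$, $\sum_t \alpha_t^2<\infty$) and the requirement that every $(\tau_i,a_i)$ pair be visited infinitely often, the Watkins--Dayan argument yields $Q_i \to Q_i^\star$ almost surely.

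The main obstacle is \emph{not} any new algebraic difficulty created by $c_i$ but the usual non-Markovianity injected by the other agents' concurrently changing policies, which is a well-known limitation of IQL and is not introduced by the modification here. The correct reading of the theorem is therefore that multiplying by $c_i$ preserves convergence in the sense that it does not manufacture any obstruction beyond those already present in IDQL: as long as the standard IQL assumptions (stationary or sufficiently slowly varying joint behaviour policy, bounded rewards, adequate exploration) hold for $r$, they automatically transfer to $\tilde r_i$ via the bound $|\tilde r_i|\le|r|$ and the measurability of $c_i$ guaranteed by Assumption~\ref{ass:assump_1}. It is precisely this reduction---from the causality-weighted update to ordinary Q-learning with a relabelled reward---that carries the proof.
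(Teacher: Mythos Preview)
Your proposal is correct and follows essentially the same approach as the paper: both recognize that multiplying by $c_i\in\{0,1\}$ merely replaces the team reward $r$ by a relabelled reward (the paper frames this as a case split into ``update with $r$'' versus ``update with $r=0$'', while you absorb it into $\tilde r_i=c_i\,r$), and then appeal to standard single-agent Q-learning convergence. Your treatment is in fact more explicit than the paper's---you spell out boundedness, the $\gamma$-contraction of the Bellman operator, the Robbins--Monro step-size conditions, and the Watkins--Dayan theorem, and you correctly flag the residual non-stationarity inherited from IQL---whereas the paper's proof is a brief two-case expansion followed by the observation that each branch is an ordinary Q-learning update and hence converges.
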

The proof can be found in appendix \ref{sec:app_proof}. In the configuration used for our approach, each agent has an individual policy network whose parameters are not shared with any other agents. In other words, in order to let agents learn independently we opt to maintain an independent network for each agent and do not use the parameter sharing convention of the policy networks \cite{gupta_cooperative_2017}, adopted by most of the related work in the CTDE paradigm \cite{rashid_qmix_2018,sunehag_value-decomposition_2018,wang_qplex_2021}. Thus, in the proposed methods and in the baseline IDQL, each agent has an independent policy network $\pi_i:i\in\{1,\ldots,N\}$ that is updated independently using the DQN loss function as in Eq. (\ref{eq:dqn_loss}), but with respect to $Q_i$. In ICL each agent adjusts the reward in the loss for the network update, as described in Eq. (\ref{eq:theo_eq2}).

\subsection{Causality Effect in the Environments Used}\label{sec:caus_envs}
In this subsection we describe the environments used for the experiments and show how $c_i$ can be intuitively estimated for ICL, when there is some prior knowledge about the task. Importantly, we use the causal detections estimated by this method as the ground truth for the ACD method that will be introduced ahead. Note that, in the tasks described, the team shares a reward that results from the sum of $N$ individual sub-rewards that are calculated in the environment and the agents do not see.
\begin{figure}
    \centering
    \subfigure[Predator-Prey]{\label{fig:env_a}\includegraphics[width=0.20\columnwidth]{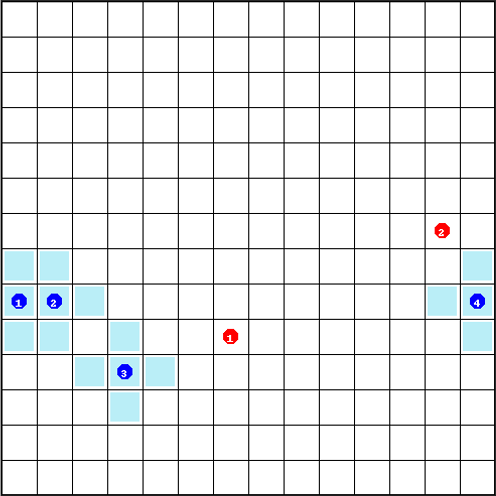}}
    \hfill
    \subfigure[Lumberjacks]{\label{fig:env_b}\includegraphics[width=0.20\columnwidth]{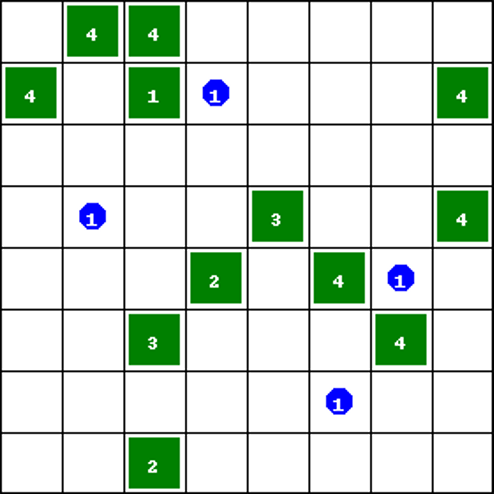}}
    \hfill
    \subfigure[SMAC-3m]{\label{fig:env_c}\includegraphics[width=0.20\columnwidth]{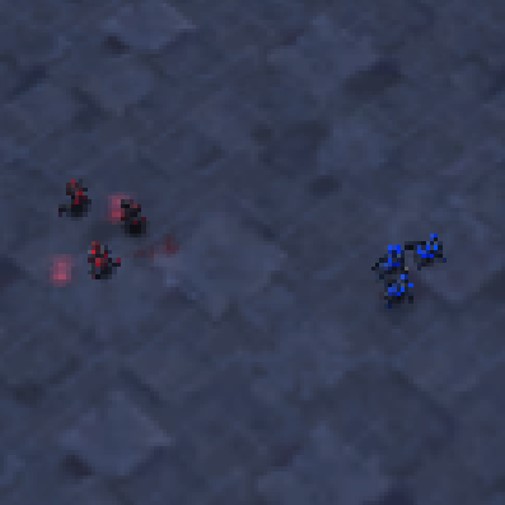}}
    \hfill
    \subfigure[SMAC-5m]{\label{fig:env_d}\includegraphics[width=0.20\columnwidth]{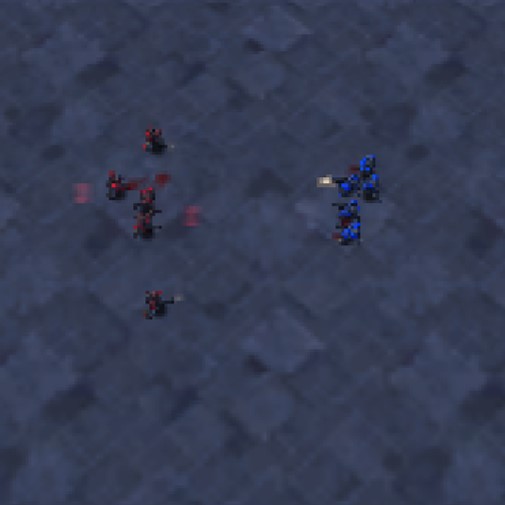}}
    \caption{Cooperative environments used for the experiments.}
    \label{fig:maps}
\end{figure}

\textbf{Predator-Prey} is a $14\times14$ grid world where a team of 4 agents needs to capture two moving preys (Fig. \ref{fig:maps}a). There is a step penalty of -0.01 and every time a prey is caught, each one of the 4 agents receives a reward of +5. However, a prey can only be caught when at least two agents do it at the same time. If only one of the agents catches the prey, then nothing happens, and the environment continues. Hence, this requires for cooperation among the elements of the team. Since only two agents are needed at the same time to capture a prey, some of the agents might not have been involved in the capture. This can lead the agents that did not participate in the capture to show later lazy behaviours. To adjust the credit assignment to the agents, we use a causality concept to relate the individual observations $o_i$ with the team reward. In this case, at each timestep $t$, each agent will only be rewarded if the following condition $C_1$ verifies: there is a positive reward (capture), and there is at least one prey in the observation mask of the agent in the moment before the capture, i.e., a boolean $C_1=(prey\ in\ o_i^{t-1}\wedge r^t > 0)$.

\textbf{Lumberjacks} is a multi-agent cooperative environment where the goal is to chop down all the trees in a $8\times8$ grid world (Fig. \ref{fig:maps}b). There are 4 agents in this environment and each tree has a randomly assigned level $l:l\in\{1,\ldots,N\}$, where $l$ agents need to step into the cell where the tree is at the same time to cut it. There is a reward of +5 for each agent every time a tree is cutdown and a step penalty of -0.1. In this environment, we only give credit to each agent when the two following conditions are satisfied: 1) there is a positive reward (tree cutdown), and there is a tree in the observation mask of the agent in the moment before the tree was cut, i.e., the boolean $C_1=(tree\ in\ o_i^{t-1}\wedge r^t > 0)$, and 2) there is a positive reward, and the number of agents seen by the agent $i$ in his observation mask (including himself) is greater or equal than the level $l$ of at least one of the existing trees observed in the observation mask of the agent $i$, $C_2=(|agents \ in\ o_i^{t-1}| \geq l\wedge r^t > 0)$.

\textbf{StarCraft Multi-Agent Challenge (SMAC)} 3m and 5m (Fig. \ref{fig:maps}c and d) are part of the SMAC collection of environments \cite{samvelyan19smac}. In these environments a team of (3 and 5) agents has to defeat the enemy team (also with 3 and 5 units) by shooting. Each agent receives an intermediate reward when an agent shoots one of the enemies, based on the damage dealt. Furthermore, there is a final reward when they win the battle. In these environments, each agent has a sight range that allows to see allies and enemies that lie within this range. The causality factor for these environments is calculated with basis on the condition: there is a positive intermediate reward (an enemy was shot), and the agent $i$ can see at least one enemy $p$ within a distance $d_p$ from his sight range $K_i$ at the moment before the shot, i.e., the boolean $C_1=(\exists\ d_p:d_p\in \left[0,K_i^{t-1}\right]\wedge r^t > 0)$. Accordingly, when there is an intermediate reward in this environment, only the agents that satisfy $C_1$ will be rewarded. This will encourage them to help the team and participate in the overall common goal. In the case of the final reward when the agents win the battle, we let all the agents receive the reward since dead agents will not be able to see other units anymore but might have contributed to win the game.
\begin{figure*}[!t]
    \centering
    \includegraphics[width=\textwidth]{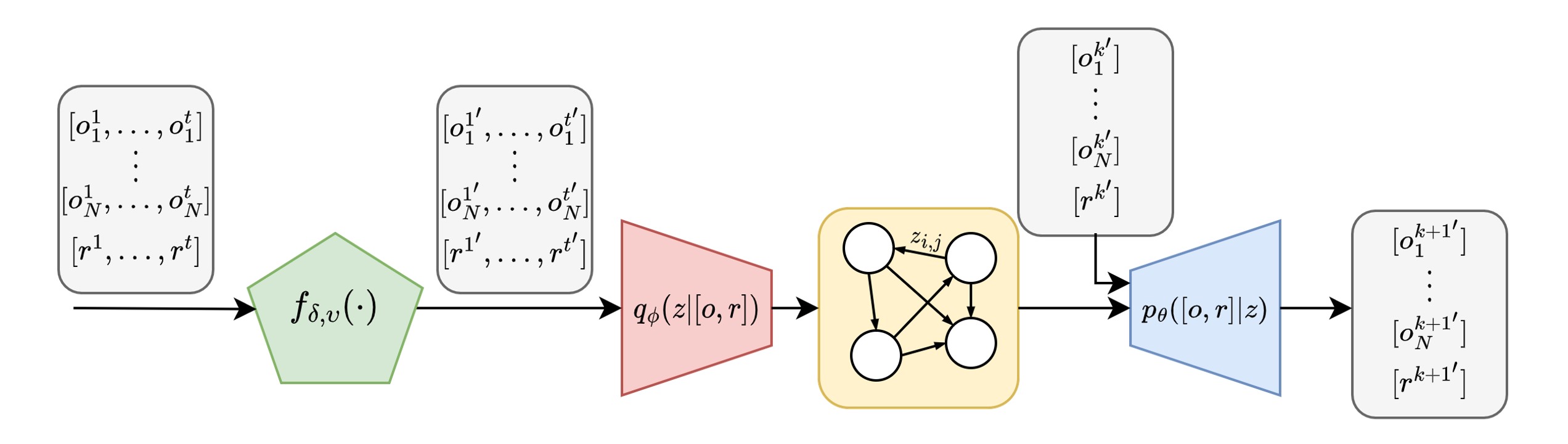}
    \caption{Proposed architecture of ACD applied to MARL problems. The encoder $q_\phi$ receives the inputs after going through function $f_{\delta,\upsilon}$ and predicts the causal graph that describes the relations $z$ among input observations and rewards (time series) over the episodes. A node $i$ is causally related to $j$ if there is a directed edge $z_{i,j}=1$ between them. Finally, a decoder $p_\theta$ predicts the next timesteps of the series given the current timestep and the predicted causal graph.}
    \label{fig:acd_arch}
\end{figure*}

\subsection{Amortized Causal Discovery (ACD) for MARL}\label{sec:acd_meth}
We now introduce a second approach that aims to support that causal relations exist in the dynamics of MARL problems and can be inferred. In this sense, we intend to show that ACD \cite{lowe_amortized_2022} can be combined with MARL problems and used to infer observation-reward causal relations within teams of agents. We refer to this method as ACD-MARL. However, it is important to note that, in order to use state-of-the-art causality detection methods in MARL such as ACD, we must abdicate of some independence in learning. More precisely, opposing to the previously described ICL, ACD requires to see all the observations and the team reward in order to infer the causal relations accurately. When given only local observations, the predictions are inaccurate due to lack of information.

For this method, we use a modified ACD architecture to be used for MARL problems. We aim to use ACD to predict the causal relations between individual observations and the team reward in a team of agents in MARL. The first stage of the method consists of gathering data from MARL to be used as input to train ACD. Hence, we collected observations and rewards of multiple successful MARL episodes, i.e., episodes where the team solves the task and receives rewards for that. After collecting successful episodes, we end up with a set of multi-variate time series, where each sub-sample contains $N$ time series of observations and one time series of rewards (the team reward). Also note that, as in the lines of the time series applications discussed in \cite{lowe_amortized_2022}, the model aims to predict one causal graph per episode that describes the causal relations between observations and the team reward for each episode.

The architecture of the proposed ACD-MARL is depicted in Fig. \ref{fig:acd_arch}. Initially, we preprocess the series and apply the function $f_{\delta,\upsilon}$ to the sampled time series and give them as input to the encoder of the model. This function describes a Savitzky-Golay Filter \cite{savitzky_smoothing_1964} with window size $\delta=t/2-(1-t/2\bmod 2)$ where $t$ is the number of timesteps in the input series, and applies a polynomial of order $\upsilon=10$ to the series. Additionally, we normalise the reward series. As mentioned, the goal of the encoder is to predict the causal graph that best describes the causal relations among the given time series. The resulting graph is a non-symmetric $(N+1)\times (N+1)$ adjacency matrix that represents the edges of the graph across the $N+1$ time series. In this work, we consider 2 edge types: edge or no edge (represented in the adjacency matrix by 1 and 0, respectively). Importantly, as it was described in the previous sections, this work focuses specifically on finding causal relations from the observations to the rewards, i.e., $o\rightarrow r$ (as per Definition \ref{def:te_2}, we aim to understand how the individual observations of each agent help to predict the future values of the rewards). As such, we stack the inputs in the shape $[o_1,o_2,\dots,o_N,r]$ and focus particularly on the last column of the learned adjacency matrix by the encoder $q_\phi(z|[o,r])$, where each value $i$ corresponds to $c_i(o_i,r)$ in the ICL ground truth described in section \ref{sec:icmarl}. The learned causal relations $z$ by the encoder are then given to the decoder $p_\theta([o,r]|z)$ together with the current values of the inputs at a timestep $k$ (Fig. \ref{fig:acd_arch}), predicting the next set of values given the current inputs and the learned relations $z$. Since we are using as input time series that are MARL episodes and, hence, have strong temporal dependencies, we use an RNN-based decoder instead of a standard MLP decoder. This model minimizes the loss described in Eq. (\ref{eq:elbo_loss}).

After the model is trained, we can use the trained encoder to predict the $o\rightarrow r$ causal relations ($c_i(o_i,r)$ for an agent $i$) within the training of MARL, similarly to ICL and the updates according to Theorem \ref{theo:theo1}, using the loss in Eq. (\ref{eq:dqn_loss}) with respect to $Q_i$.
\begin{figure*}[!t]
    \centering
    \includegraphics[width=0.4\columnwidth]{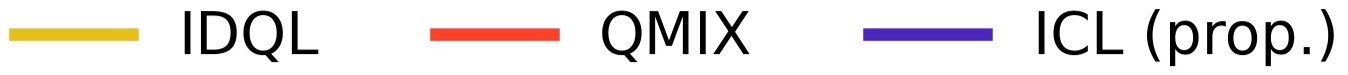}
    \\
    \vspace{0.00mm} 
    \subfigure[Predator-Prey]{\label{fig:res_a}\includegraphics[width=0.23\textwidth]{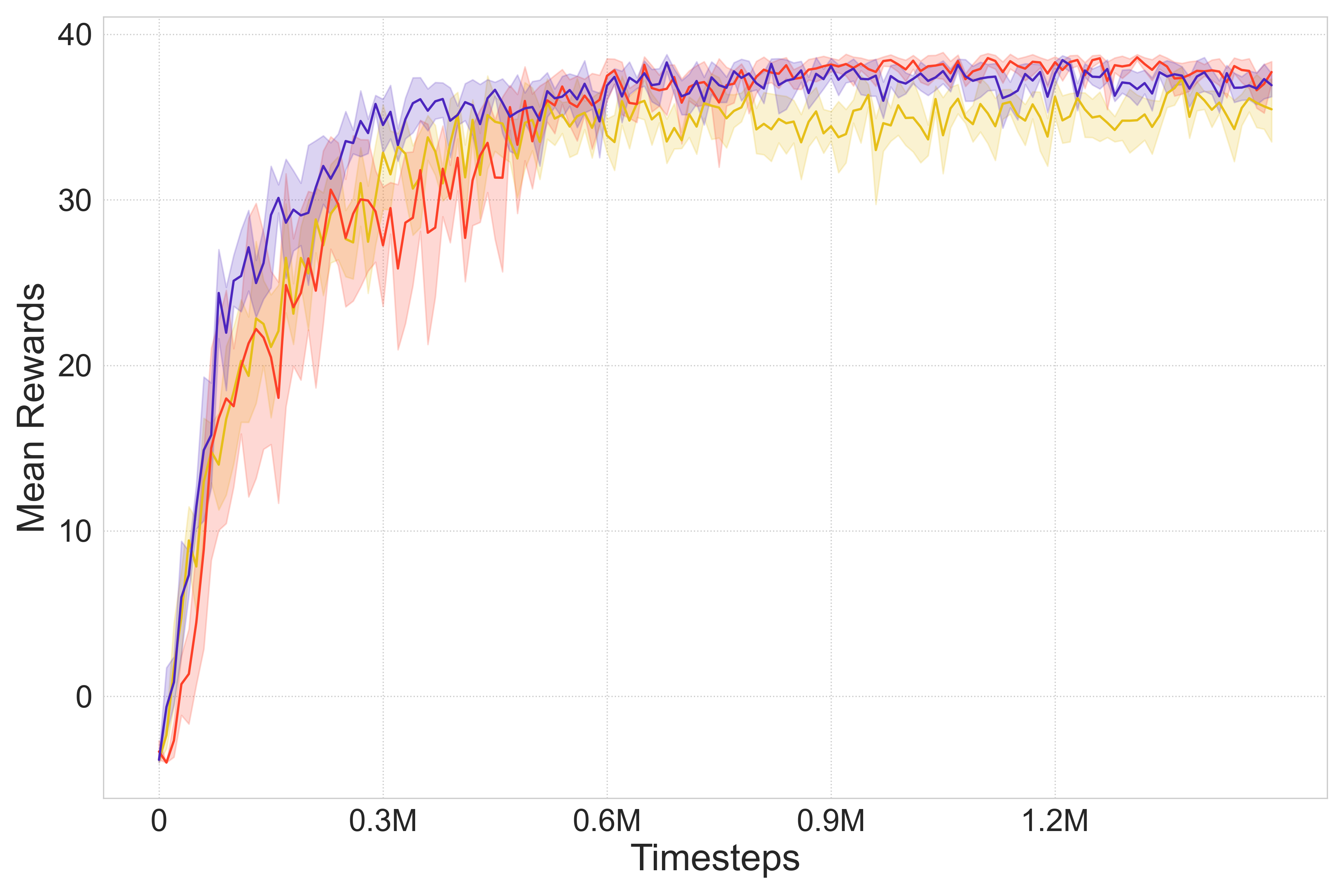}}
    \subfigure[Lumberjacks]{\label{fig:res_b}\includegraphics[width=0.23\textwidth]{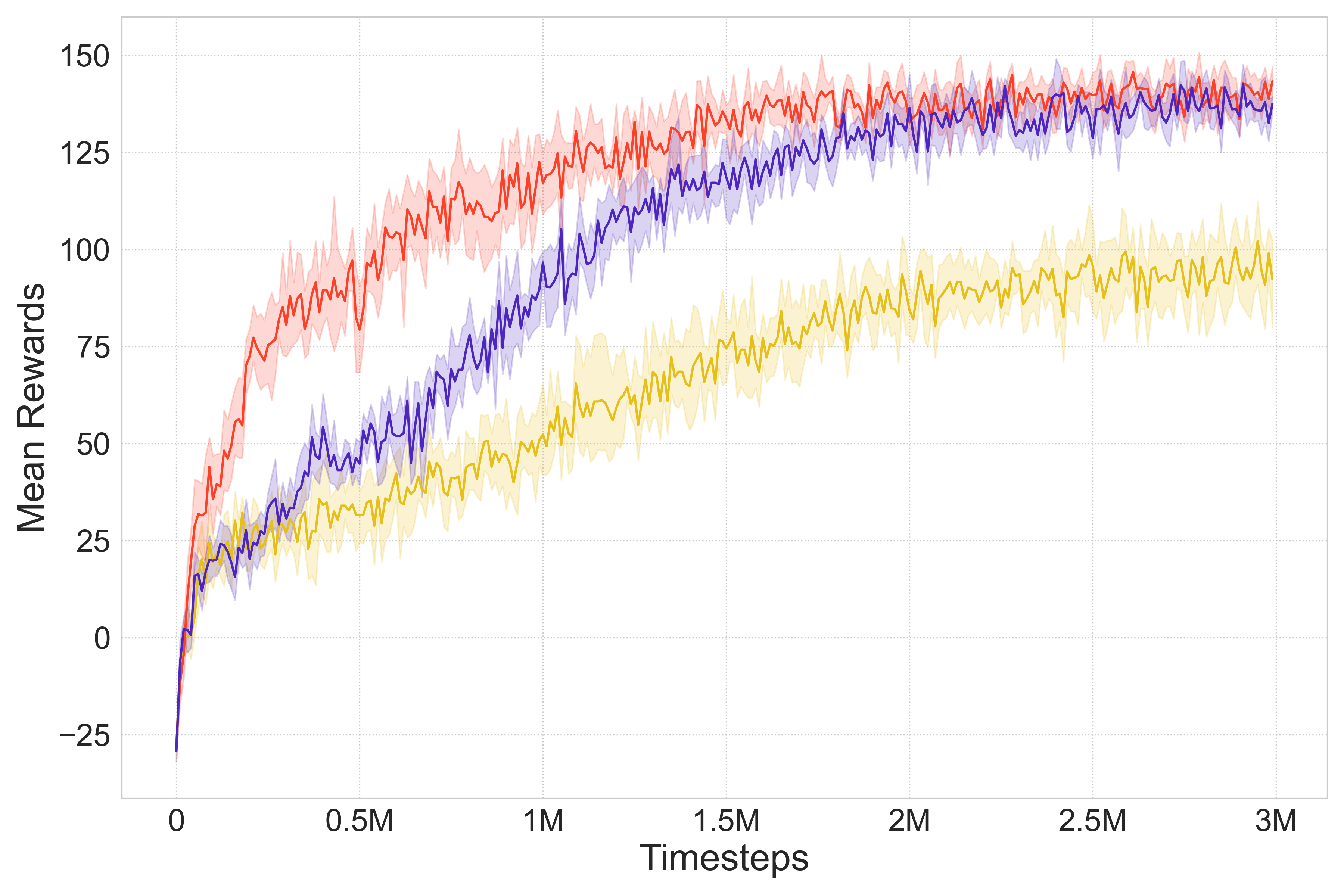}}
    \subfigure[SMAC-3m]{\label{fig:res_c}\includegraphics[width=0.23\textwidth]{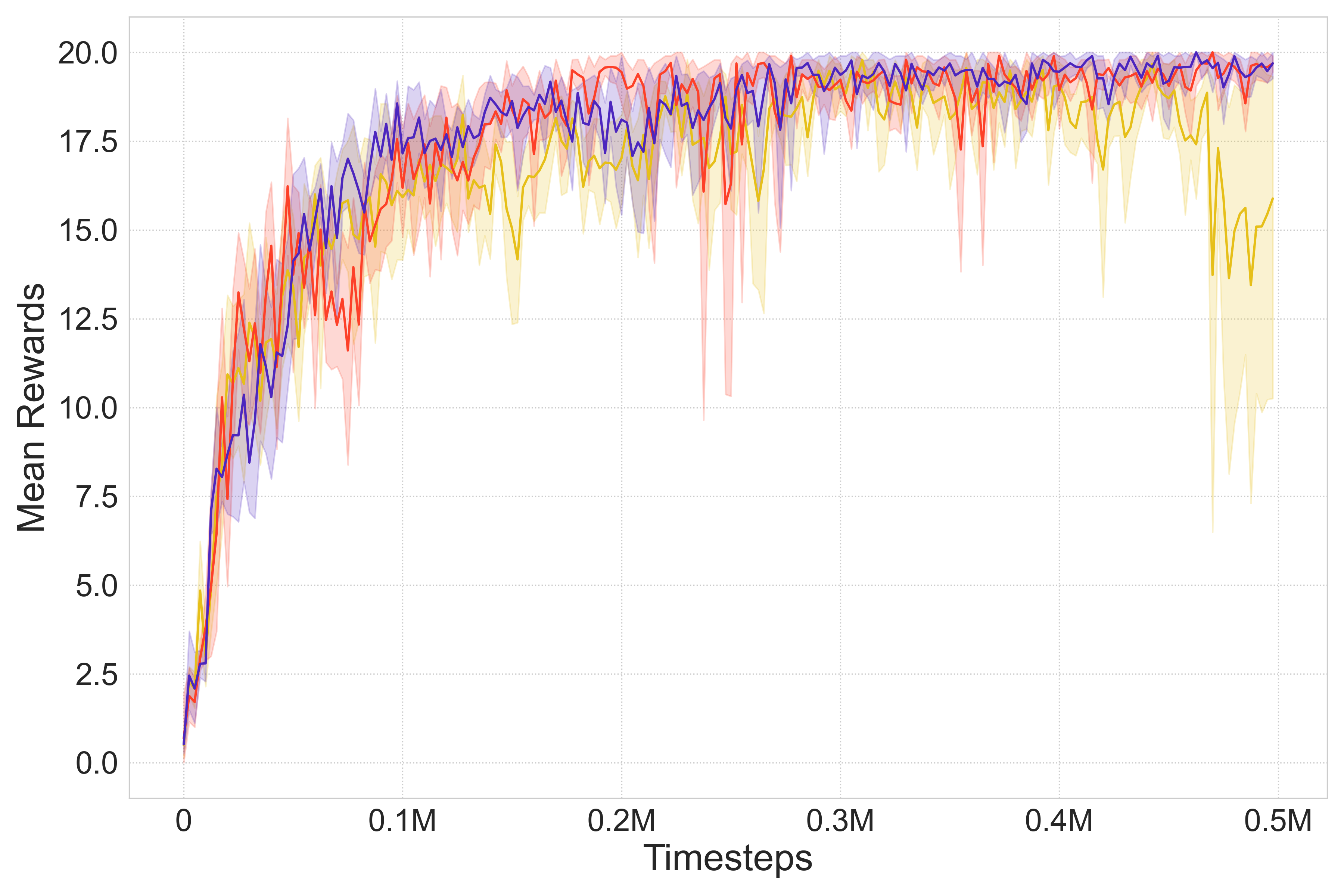}}
    \subfigure[SMAC-5m]{\label{fig:res_d}\includegraphics[width=0.23\textwidth]{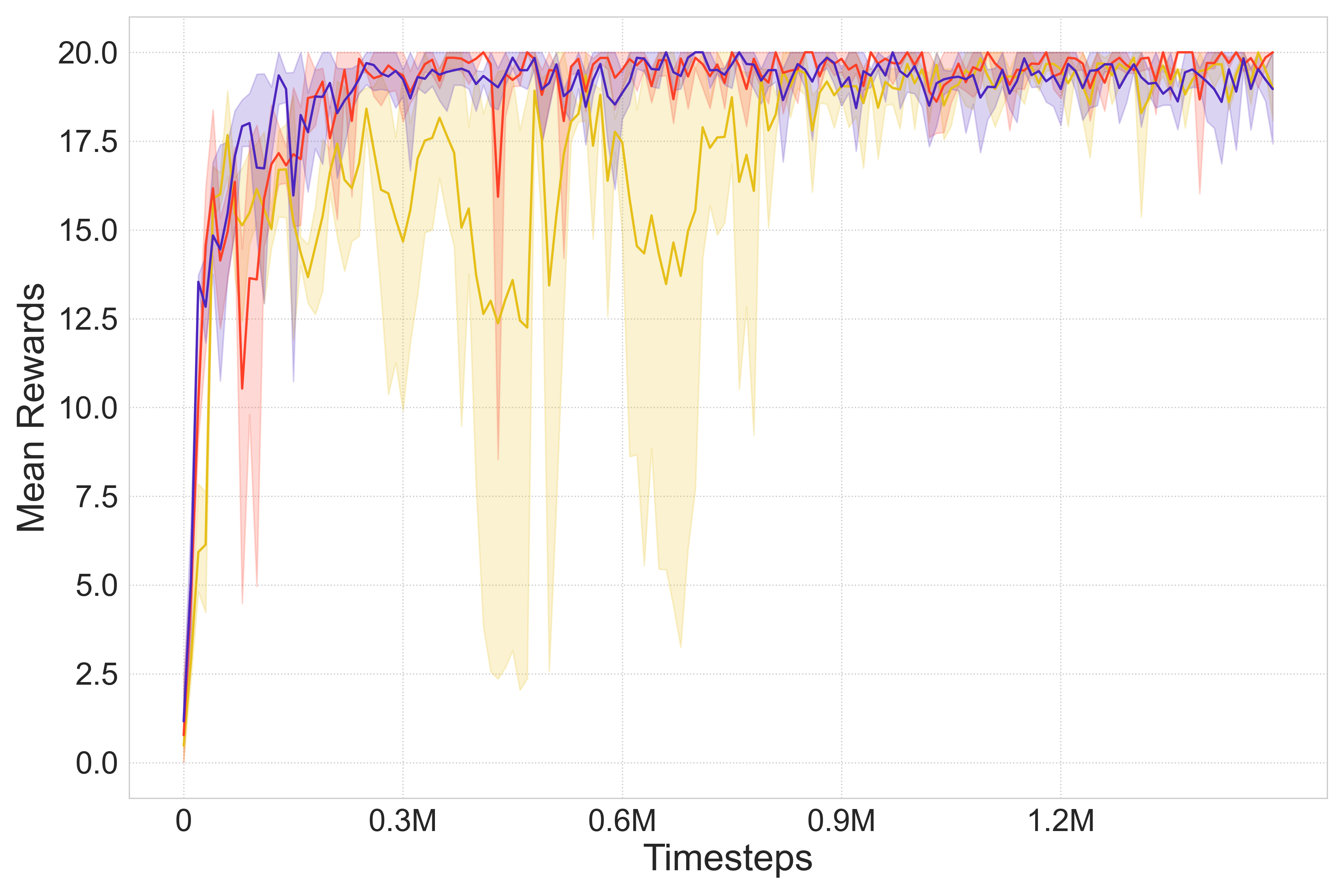}}
    \caption{Team rewards obtained for the experimented environments used to validate the ground truth method ICL described in section \ref{sec:icmarl}. The bold area represents the 95\% confidence intervals.}
    \label{fig:results}
\end{figure*}

\setcounter{footnote}{0} 
\section{Experiments and Results\protect\footnote{Codes used can be found at \href{https://github.com/rafaelmp2/causal-marl}{https://github.com/rafaelmp2/causal-marl}}}\label{sec:results}
In this section we evaluate the performance of the proposed approaches. Firstly, we intend to validate ICL by demonstrating that it can solve the described tasks, even with superior performance in most cases. IDQL, as described in section \ref{sec:idql}, and QMIX \cite{rashid_qmix_2018} are the two methods used to benchmark ICL. The motivations for the use of these two baselines are, respectively, to demonstrate how the causality detection used by the proposed method improves the quality of the behaviours learned for fully independent learners and to investigate whether using fully independent learning over CTDE methods like QMIX (that are centralised during training and share parameters) is a reasonable way to learn efficient behaviours, despite only using individual observations and independent networks. Additionally, we evaluate the individual behaviours learned by the agents and show that ACD-MARL can predict causality relations in MARL, using ICL relations as the ground truth.

\subsection{Rewards with ICL}\label{sec:results_1}
In Fig. \ref{fig:results} we can see the evolution of the team rewards obtained by the proposed method and the baselines QMIX and IDQL (average of 6 independent runs). Fig. \ref{fig:results}a and \ref{fig:results}c show that all the experimented methods learned policies that can solve Predator-Prey and SMAC-3m. Yet, when compared to QMIX we can see that ICL learns the tasks sooner, and remains then at the same level for the rest of the training time. On the other hand, IDQL stays slightly below the other methods in both environments. In the case of SMAC-5m, Fig. \ref{fig:results}d shows once again a faster initial boost of ICL that then remains at an optimal level. In this scenario, IDQL shows heavy variations of performance when compared to the others. These results suggest that the behaviours learned by ICL based on the individual intuition and causal perceptions of the agents led to more efficient and intelligent behaviours. We provide more details about the agent behaviours learned by ICL in appendix \ref{sec:app_indiv_icl}.

Fig. \ref{fig:results}b illustrates the performance of the attempted methods in Lumberjacks. This task can be easily solved by QMIX, demonstrating the usefulness of centralised training in this scenario while IDQL fails to achieve a good team reward in this task. Although ICL does not converge as quickly as QMIX in this environment, as the training progresses it reaches the same level as QMIX. Furthermore, when compared to IDQL, ICL demonstrates a much-increased performance. This enforces the fact that, when the agents are constrained to use just their local observations, using causal detection with ICL still allows to learn efficient policies, achieving the same level of performance of centralised training methods such as QMIX, despite the lack of full state information and parameter sharing.

Overall, ICL demonstrates a consistently good performance in all the environments. Exceptionally, ICL takes longer to converge in Lumberjacks, but eventually achieves the same level as QMIX. The presented results also lead to the conclusion that, under certain circumstances, centralised learning can be replaced with decentralised learning, together with some processing of the information gathered from the environment. Despite centralisation being useful in scenarios like Lumberjacks (Fig. \ref{fig:results}c), our results suggest that it is possible to learn in a fully decentralised manner, avoiding not only centralisation but also parameter sharing. While parameter sharing can be useful for training speed and resource matters, the possibility of avoiding this convention allows to create more suitable agents capable of learning independently to cooperate with others. For instance, when it comes to applications in real systems, the agents might not be able to use a common network that shares parameters or have a centralised oracle that shares common observations. In such scenarios, the proposed method can provide a useful independent learning mechanism based on causality.
\begin{figure}[!t]
    \centering
    \includegraphics[width=0.6\columnwidth]{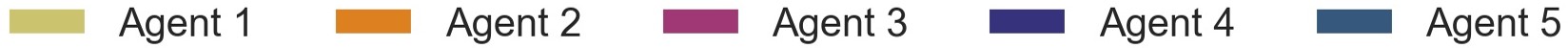}
    \\
    \vspace{0.00mm} 
    \subfigure[]{\label{fig:add_res_acd_a}\includegraphics[width=0.23\columnwidth]{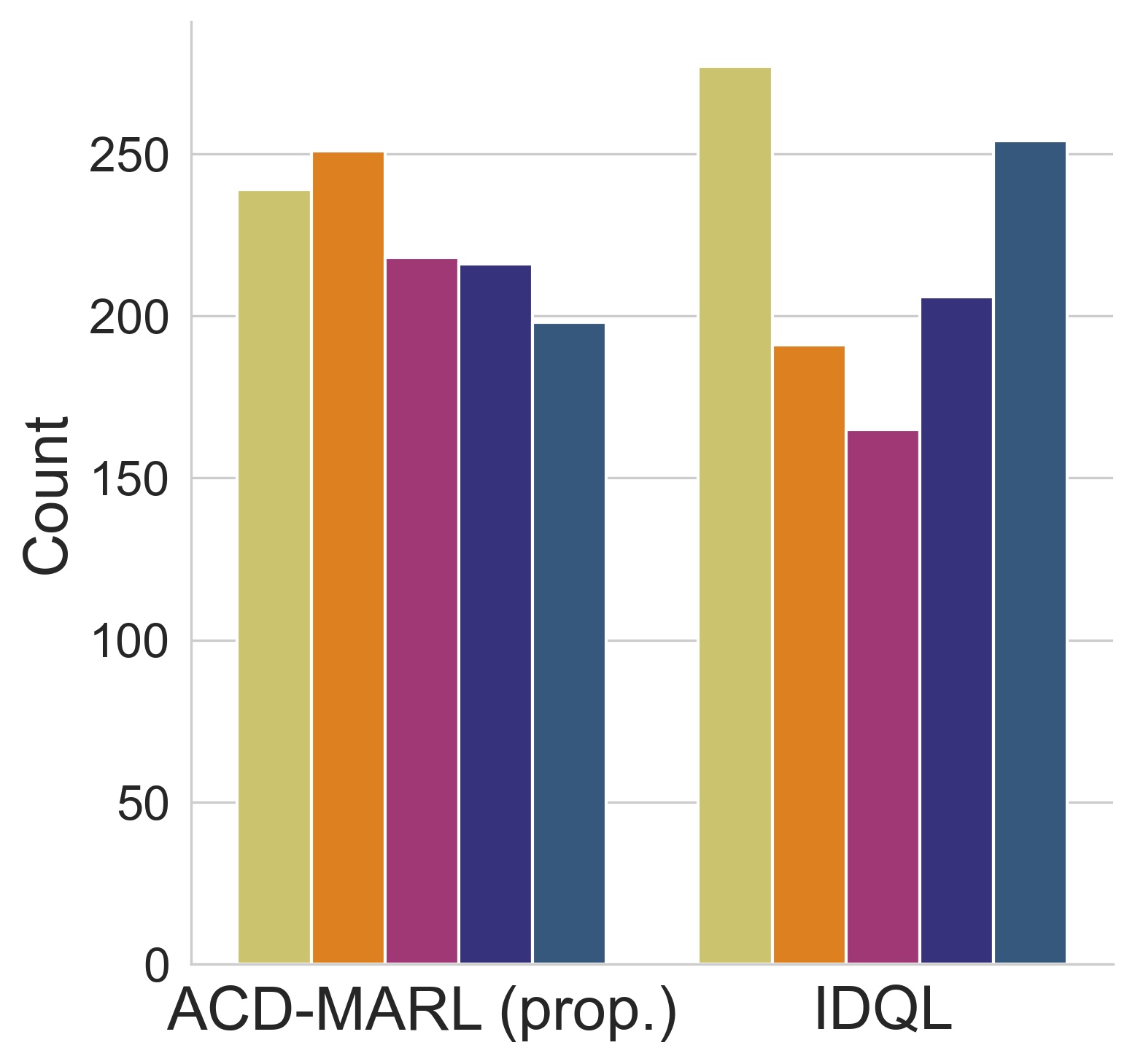}}
    \subfigure[]{\label{fig:add_res_acd_b}\includegraphics[width=0.23\columnwidth]{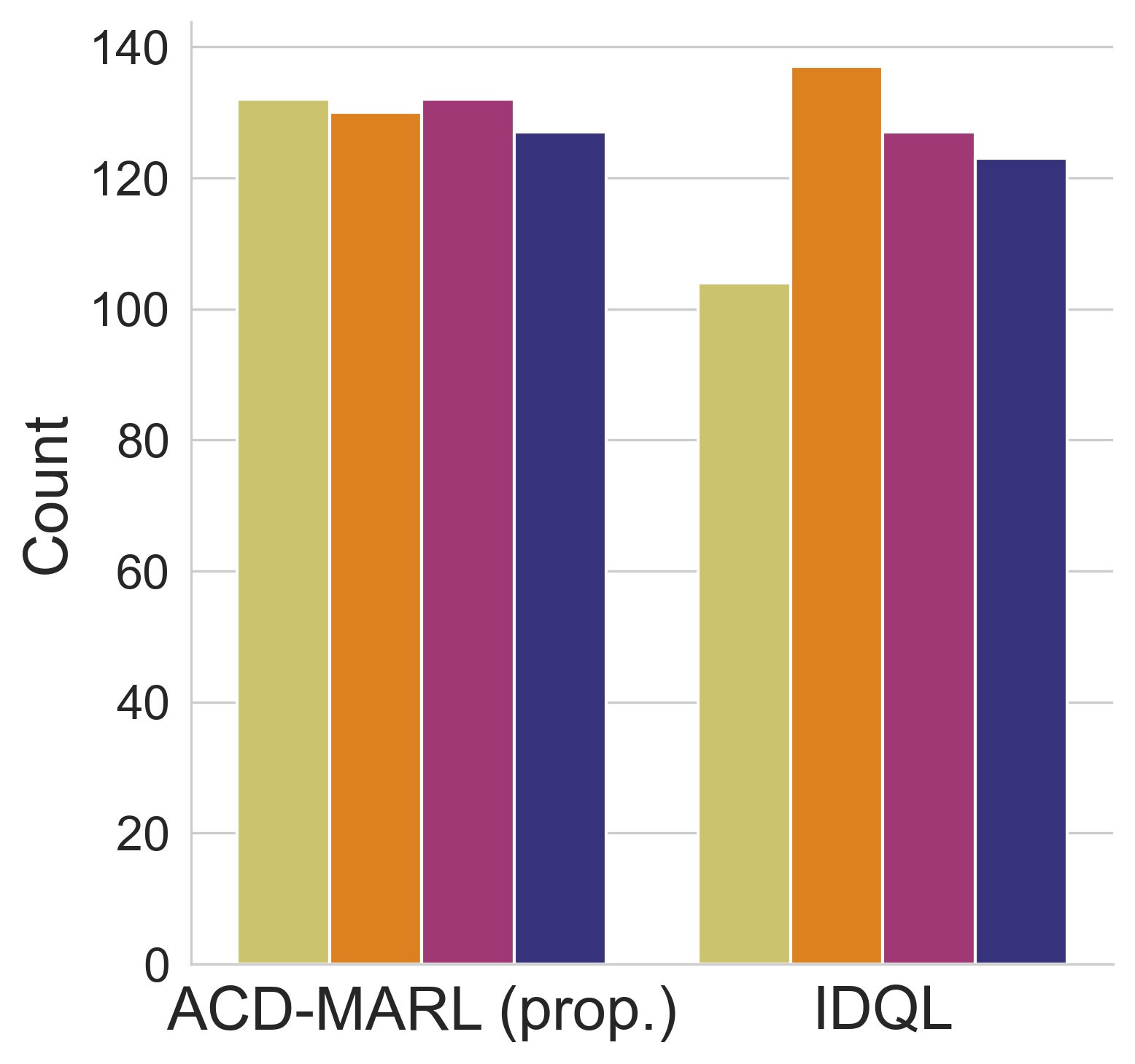}}
    \subfigure[]{\label{fig:add_res_acd_c}\includegraphics[width=0.23\columnwidth]{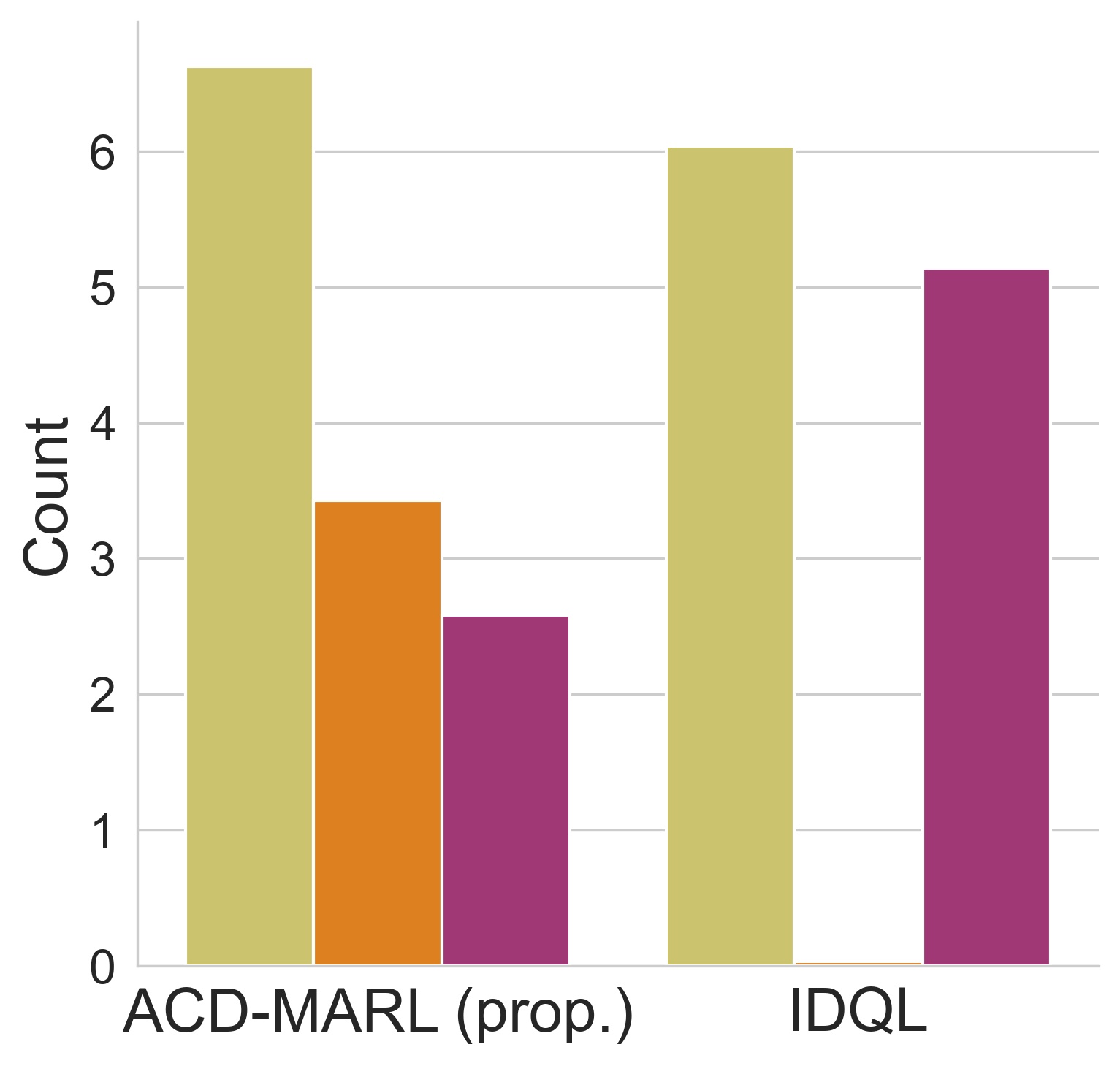}}
    \subfigure[]{\label{fig:add_res_acd_d}\includegraphics[width=0.23\columnwidth]{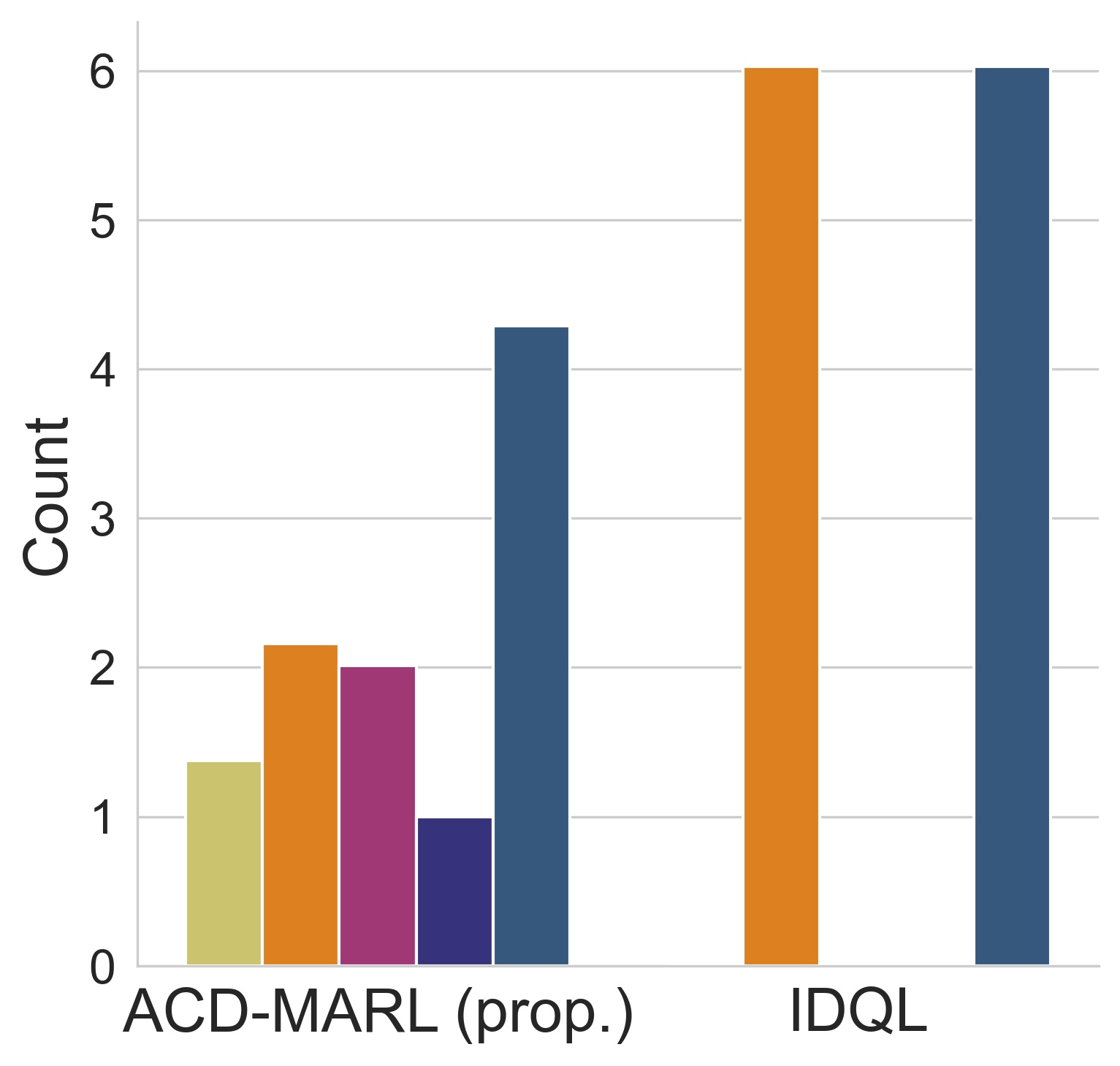}}
    \caption{Behaviour metrics with trained agents for ACD-MARL vs IDQL. (a) cumulative preys caught per agent in Predator-Prey-sp (300 episodes with 5 agents for result significance); (b) cumulative agent contributions to cut a tree in Lumberjacks-sp (300 episodes); (c) 300-episode mean shots fired per agent in SMAC-3m-sp and (d) in SMAC-5m-sp (in this case, we can see that for IDQL only 2 agents out of 5 perform shots).}
    \label{fig:add_results_acd}
\end{figure}

\subsection{Causality in ACD-MARL and Learned Behaviours}\label{sec:res_acd}
In the previous section we have demonstrated that detecting causal relations in MARL can lead to performance improvements. When done accurately enough, the agents can understand whether they were directly responsible or not for the rewards returned from the environment, improving the credit assignment.

In this section, we show the results of ACD-MARL aiming to support that causal relations are present in the dynamics of MARL problems. To evaluate the method we use the causality detection mechanisms of ICL presented in the previous sections \ref{sec:caus_envs} and \ref{sec:results_1} as the ground truth.

In order to evaluate the performance of ACD-MARL we use a set of sparser environments from the family of the ones represented in Fig. \ref{fig:maps}. In this set of environments, we modify the tasks in order to contain less positive reward signals. As a result, the individual performances of the agents will be much more affected, enhancing the impact of the proposed method in this paper. We opt for this configuration in order to investigate scenarios that allow a stronger analysis of the impact of causal discovery in MARL scenarios. In this section we refer to these environments as the name of the environment followed by the suffix "-sp". We provide further details and discussions in appendix \ref{sec:app_sparser_envs}. 

\begin{table}[h]
  \centering
  \caption{Accuracies of the $c_i$ predictions made by the modified ACD (Fig. \ref{fig:acd_arch}) compared against the validated methods of ICL described in section \ref{sec:caus_envs}. F.P. (False Positives) correspond to wrongly predict a $o\rightarrow r$ \textit{edge} in the graph and F.N. (False Negatives) to wrongly predict \textit{no edge} in the graph.}
  \label{tab:acd_comp}
  \begin{tabular}{cccc}\toprule
    Samples from & Correct (\%) & F.P. (\%) & F.N. (\%) \\ \midrule
    Predator-Prey & 68\% & 29\% & 3\% \\
    Lumberjacks & 67\% & 28\% & 5\% \\
    SMAC-3m & 95\% & 4\% & 1\% \\ 
    SMAC-5m & 87\% & 12\% & 1\% \\ \bottomrule
    \end{tabular}
\end{table}

The results in Table \ref{tab:acd_comp} show that the predicted $o\rightarrow r$ causal relations by ACD-MARL are close to the ground truth relations of ICL, enforcing that causal relations are indeed present in the underlying dynamics of MARL. The cases where ACD-MARL fails to predict correspond mostly to false positive cases. Following the lines of the method proposed in this paper to punish lazy agents using a factor $c_i$ (as per Theorem \ref{theo:theo1}), we note that these false positive cases are not as harmful for the performance of a MARL team as false negatives (that are very low) when we use the predictions to adjust the credit assignment. This is due to false positives being equivalent to simple independent learning. By having these sparsely, together with a high number of correct credit assignment causal predictions, it would still result in an improvement of team performance and individual behaviours. These observations are supported by the results in Fig. \ref{fig:add_results_acd}, showing that when we use the trained ACD to predict $c_i$ and update the agents as per Theorem \ref{theo:theo1}, the agents learn more intelligent behaviours. We can see that the individual behaviours of the ACD-MARL agents are more balanced both for Predator-Prey and Lumberjacks (all agents show equally high performances) opposing to IDQL, where we can see some discrepancies. In addition, in SMAC-3m and SMAC-5m, we can see that some of the agents of IDQL fail to learn cooperative policies and do not help the team, while ACD-MARL incentivises all the agents to contribute for the team goal, creating more intelligent behaviours. For instance, in SMAC-5m (Fig. \ref{fig:add_results_acd}d) only two agents learned to shoot enemies in IDQL, meaning that the others get lazy and do not learn cooperative policies (they simply roam around). On the contrary, ACD-MARL proves to be capable of training cooperative policies for all agents, where we can see that all of them help towards the team goal.

\section{Conclusion and Further Work}
This paper investigated the use of causality in MARL and how causal detection can be leveraged to improve learning. By learning an agent-wise causality relation across individual observations and the team reward, the credit assignment can be adjusted, resulting in the elimination of lazy agents that deteriorate the performance of the team due to their non-cooperative behaviours.

By formalising a MARL problem as a causality detection problem, we observe that ACD can be used to accurately model the causal dynamics across observations and rewards in MARL episodes. The results show accuracies close to the ground truth causal detections estimated by the first proposed method in this paper (ICL), enforcing the fact that causal relations are indeed present in MARL systems and can be inferred.

Overall, the results in this paper enhance the importance of investigating causality in the field of MARL, supporting discussions and hypotheses in previous works such as \cite{causal-marl-grimbly}. In the future, we intend to study how causality can be applied in MARL real scenarios. We believe that causality applications in MARL can be a breakthrough not only in simulation, but also in real scenarios where reliable communication or a centralised oracle is not available. In such cases, agents might have to coordinate from scratch and develop a sense of self causal-relation perception.

\bibliographystyle{unsrt}


\begin{thebibliography}{40}
\providecommand{\natexlab}[1]{#1}
\providecommand{\url}[1]{\texttt{#1}}
\expandafter\ifx\csname urlstyle\endcsname\relax
  \providecommand{\doi}[1]{doi: #1}\else
  \providecommand{\doi}{doi: \begingroup \urlstyle{rm}\Url}\fi

\bibitem[Gupta et~al.(2017)Gupta, Egorov, and
  Kochenderfer]{gupta_cooperative_2017}
Jayesh~K. Gupta, Maxim Egorov, and Mykel Kochenderfer.
\newblock Cooperative {Multi}-agent {Control} {Using} {Deep} {Reinforcement}
  {Learning}.
\newblock In Gita Sukthankar and Juan~A. Rodriguez-Aguilar, editors,
  \emph{Autonomous {Agents} and {Multiagent} {Systems}}, volume 10642, pages
  66--83. Springer International Publishing, Cham, 2017.
\newblock ISBN 978-3-319-71681-7 978-3-319-71682-4.
\newblock \doi{10.1007/978-3-319-71682-4_5}.
\newblock URL \url{http://link.springer.com/10.1007/978-3-319-71682-4_5}.
\newblock Series Title: Lecture Notes in Computer Science.

\bibitem[Sunehag et~al.(2018)Sunehag, Lever, Gruslys, Czarnecki, Zambaldi,
  Jaderberg, Lanctot, Sonnerat, Leibo, Tuyls, and
  Graepel]{sunehag_value-decomposition_2018}
Peter Sunehag, Guy Lever, Audrunas Gruslys, Wojciech~Marian Czarnecki, Vinicius
  Zambaldi, Max Jaderberg, Marc Lanctot, Nicolas Sonnerat, Joel~Z. Leibo, Karl
  Tuyls, and Thore Graepel.
\newblock Value-{Decomposition} {Networks} {For} {Cooperative} {Multi}-{Agent}
  {Learning}.
\newblock In \emph{Proceedings of the 17th {International} {Conference} on
  {Autonomous} {Agents} and {MultiAgent} {Systems}}, pages 2085-- 2087,
  Stockholm, Sweden,, July 2018.

\bibitem[Son et~al.(2019)Son, Kim, Kang, Hostallero, and Yi]{son_qtran_2019}
Kyunghwan Son, Daewoo Kim, Wan~Ju Kang, David Hostallero, and Yung Yi.
\newblock {QTRAN}: {Learning} to {Factorize} with {Transformation} for
  {Cooperative} {Multi}-{Agent} {Reinforcement} learning.
\newblock In \emph{Proceedings of the 36th {International} {Conference} on
  {Machine} {Learning}}, volume~97, pages 5887--5896, June 2019.

\bibitem[Wang et~al.(2021)Wang, Ren, Liu, Yu, and Zhang]{wang_qplex_2021}
Jianhao Wang, Zhizhou Ren, Terry Liu, Yang Yu, and Chongjie Zhang.
\newblock {QPLEX}: {Duplex} {Dueling} {Multi}-{Agent} {Q}-{Learning}.
\newblock In \emph{International {Conference} on {Learning} {Representations}},
  2021.
\newblock arXiv: 2008.01062.

\bibitem[Oliehoek et~al.(2008)Oliehoek, Spaan, and Vlassis]{olienoek_2008}
Frans~A. Oliehoek, Matthijs T.~J. Spaan, and Nikos Vlassis.
\newblock Optimal and approximate q-value functions for decentralized pomdps.
\newblock \emph{J. Artif. Int. Res.}, 32\penalty0 (1):\penalty0 289–353, may
  2008.
\newblock ISSN 1076-9757.

\bibitem[Kraemer and Banerjee(2016)]{KRAEMER201682}
Landon Kraemer and Bikramjit Banerjee.
\newblock Multi-agent reinforcement learning as a rehearsal for decentralized
  planning.
\newblock \emph{Neurocomputing}, 190:\penalty0 82--94, 2016.
\newblock ISSN 0925-2312.
\newblock \doi{https://doi.org/10.1016/j.neucom.2016.01.031}.
\newblock URL
  \url{https://www.sciencedirect.com/science/article/pii/S0925231216000783}.

\bibitem[de~Witt et~al.(2020)de~Witt, Foerster, Farquhar, Torr, Boehmer, and
  Whiteson]{de_witt_multi-agent_2020}
Christian A.~Schroeder de~Witt, Jakob~N. Foerster, Gregory Farquhar, Philip
  H.~S. Torr, Wendelin Boehmer, and Shimon Whiteson.
\newblock Multi-{Agent} {Common} {Knowledge} {Reinforcement} {Learning}.
\newblock In \emph{Advances in {Neural} {Information} {Processing} {Systems}},
  pages 9924--9935, January 2020.
\newblock arXiv: 1810.11702.

\bibitem[Neary et~al.(2021)Neary, Xu, Wu, and Topcu]{neary_reward_2021}
Cyrus Neary, Zhe Xu, Bo~Wu, and Ufuk Topcu.
\newblock Reward {Machines} for {Cooperative} {Multi}-{Agent} {Reinforcement}
  {Learning}.
\newblock In \emph{Proc.of the 20th {International} {Conference} on
  {Autonomous} {Agents} and {Multiagent} {Systems} ({AAMAS} 2021)}, May 2021.
\newblock arXiv: 2007.01962.

\bibitem[Papoudakis et~al.(2019)Papoudakis, Christianos, Rahman, and
  Albrecht]{georgios_non_stat_2019}
Georgios Papoudakis, Filippos Christianos, Arrasy Rahman, and Stefano~V.
  Albrecht.
\newblock Dealing with non-stationarity in multi-agent deep reinforcement
  learning.
\newblock \emph{CoRR}, abs/1906.04737, 2019.
\newblock URL \url{http://arxiv.org/abs/1906.04737}.

\bibitem[Peters et~al.(2017)Peters, Janzing, and
  Scholkopf]{peters_2017_elements}
Jonas Peters, Dominik Janzing, and Bernhard Scholkopf.
\newblock \emph{Elements of Causal Inference}.
\newblock MIT Press, 12 2017.

\bibitem[Löwe et~al.(2022)Löwe, Madras, Zemel, and
  Welling]{lowe_amortized_2022}
Sindy Löwe, David Madras, Richard Zemel, and Max Welling.
\newblock Amortized {Causal} {Discovery}: {Learning} to {Infer} {Causal}
  {Graphs} from {Time}-{Series} {Data}.
\newblock \emph{arXiv:2006.10833 [cs, stat]}, February 2022.
\newblock URL \url{http://arxiv.org/abs/2006.10833}.
\newblock arXiv: 2006.10833.

\bibitem[Tan(1993)]{tan_multi-agent_1993}
Ming Tan.
\newblock Multi-{Agent} {Reinforcement} {Learning}: {Independent} vs.
  {Cooperative} {Agents}.
\newblock In \emph{Proceedings of the {Tenth} {International} {Conference} on
  {Machine} {Learning}}, pages 330--337, 1993.

\bibitem[Watkins and Dayan(1992)]{watkins_technical_1992}
Christopher Watkins and Peter Dayan.
\newblock Technical {Note} {Q},-{Learning}.
\newblock In \emph{Machine {Learning}}, volume~8, pages 279--292, 1992.
\newblock URL \url{https://www.gatsby.ucl.ac.uk/ dayan/papers/cjch.pdf}.

\bibitem[Rashid et~al.(2018)Rashid, Samvelyan, de~Witt, Farquhar, Foerster, and
  Whiteson]{rashid_qmix_2018}
Tabish Rashid, Mikayel Samvelyan, Christian~Schroeder de~Witt, Gregory
  Farquhar, Jakob Foerster, and Shimon Whiteson.
\newblock {QMIX}: {Monotonic} {Value} {Function} {Factorisation} for {Deep}
  {Multi}-{Agent} {Reinforcement} {Learning}.
\newblock In \emph{Proceedings of the 35th {International} {Conference} on
  {Machine} {Learning}}, volume~80, pages 4295--4304, July 2018.
\newblock arXiv: 1803.11485.

\bibitem[Du et~al.(2019)Du, Han, Fang, Liu, Dai, and Tao]{liir_du_2019}
Yali Du, Lei Han, Meng Fang, Ji~Liu, Tianhong Dai, and Dacheng Tao.
\newblock Liir: Learning individual intrinsic reward in multi-agent
  reinforcement learning.
\newblock In H.~Wallach, H.~Larochelle, A.~Beygelzimer, F.~d\textquotesingle
  Alch\'{e}-Buc, E.~Fox, and R.~Garnett, editors, \emph{Advances in Neural
  Information Processing Systems}, volume~32. Curran Associates, Inc., 2019.
\newblock URL
  \url{https://proceedings.neurips.cc/paper/2019/file/07a9d3fed4c5ea6b17e80258dee231fa-Paper.pdf}.

\bibitem[Foerster et~al.(2018)Foerster, Farquhar, Afouras, Nardelli, and
  Whiteson]{foerster_counterfactual_2018}
Jakob Foerster, Gregory Farquhar, Triantafyllos Afouras, Nantas Nardelli, and
  Shimon Whiteson.
\newblock Counterfactual {Multi}-{Agent} {Policy} {Gradients}.
\newblock In \emph{{AAAI} 2018: {Proceedings} of the {Thirty}-{Second} {AAAI}
  {Conference} on {Artificial} {Intelligence}}, February 2018.
\newblock arXiv: 1705.08926.

\bibitem[Canese et~al.(2021)Canese, Cardarilli, Di~Nunzio, Fazzolari, Giardino,
  Re, and Spanò]{canese_multi-agent_2021}
Lorenzo Canese, Gian~Carlo Cardarilli, Luca Di~Nunzio, Rocco Fazzolari, Daniele
  Giardino, Marco Re, and Sergio Spanò.
\newblock Multi-{Agent} {Reinforcement} {Learning}: {A} {Review} of
  {Challenges} and {Applications}.
\newblock \emph{Applied Sciences}, 11\penalty0 (11):\penalty0 4948, May 2021.
\newblock ISSN 2076-3417.
\newblock \doi{10.3390/app11114948}.
\newblock URL \url{https://www.mdpi.com/2076-3417/11/11/4948}.

\bibitem[Aotani et~al.(2021)Aotani, Kobayashi, and
  Sugimoto]{aotani_bottom-up_2021}
Takumi Aotani, Taisuke Kobayashi, and Kenji Sugimoto.
\newblock Bottom-up multi-agent reinforcement learning by reward shaping for
  cooperative-competitive tasks.
\newblock \emph{Applied Intelligence}, January 2021.
\newblock ISSN 0924-669X, 1573-7497.
\newblock \doi{10.1007/s10489-020-02034-2}.
\newblock URL \url{http://link.springer.com/10.1007/s10489-020-02034-2}.

\bibitem[Grzeundefined(2017)]{marek_2019}
Marek Grzeundefined.
\newblock Reward shaping in episodic reinforcement learning.
\newblock In \emph{Proceedings of the 16th Conference on Autonomous Agents and
  MultiAgent Systems}, AAMAS '17, page 565–573, Richland, SC, 2017.
  International Foundation for Autonomous Agents and Multiagent Systems.

\bibitem[Wang et~al.(2020)Wang, Zhang, Kim, and Gu]{Wang_2020}
Jianhong Wang, Yuan Zhang, Tae-Kyun Kim, and Yunjie Gu.
\newblock Shapley q-value: A local reward approach to solve global reward
  games.
\newblock \emph{Proceedings of the AAAI Conference on Artificial Intelligence},
  34\penalty0 (05):\penalty0 7285–7292, Apr 2020.
\newblock ISSN 2159-5399.
\newblock \doi{10.1609/aaai.v34i05.6220}.
\newblock URL \url{http://dx.doi.org/10.1609/aaai.v34i05.6220}.

\bibitem[Sukhbaatar et~al.(2016)Sukhbaatar, szlam, and
  Fergus]{sukhbaatar_learning_2016}
Sainbayar Sukhbaatar, arthur szlam, and Rob Fergus.
\newblock Learning {Multiagent} {Communication} with {Backpropagation}.
\newblock In D.~D. Lee, M.~Sugiyama, U.~V. Luxburg, I.~Guyon, and R.~Garnett,
  editors, \emph{Proceedings of the 30th {International} {Conference} on
  {Neural} {Information} {Processing} {Systems}}, pages 2252--2260, 2016.

\bibitem[Foerster et~al.(2016)Foerster, Assael, de~Freitas, and
  Whiteson]{foerster_learning_2016}
Jakob~N. Foerster, Yannis~M. Assael, Nando de~Freitas, and Shimon Whiteson.
\newblock Learning to {Communicate} with {Deep} {Multi}-{Agent} {Reinforcement}
  {Learning}.
\newblock In \emph{Advances in {Neural} {Information} {Processing} {Systems}},
  volume~29, May 2016.
\newblock arXiv: 1605.06676.

\bibitem[Christianos et~al.(2021)Christianos, Papoudakis, Rahman, and
  Albrecht]{scaling-marl-filippos-2021}
Filippos Christianos, Georgios Papoudakis, Muhammad~A Rahman, and Stefano~V
  Albrecht.
\newblock Scaling multi-agent reinforcement learning with selective parameter
  sharing.
\newblock In Marina Meila and Tong Zhang, editors, \emph{Proceedings of the
  38th International Conference on Machine Learning}, volume 139 of
  \emph{Proceedings of Machine Learning Research}, pages 1989--1998. PMLR,
  18--24 Jul 2021.
\newblock URL \url{https://proceedings.mlr.press/v139/christianos21a.html}.

\bibitem[Granger(1969)]{granger_1969}
C.~W.~J. Granger.
\newblock Investigating causal relations by econometric models and
  cross-spectral methods.
\newblock \emph{Econometrica}, 37\penalty0 (3):\penalty0 424--438, 1969.
\newblock ISSN 00129682, 14680262.
\newblock URL \url{http://www.jstor.org/stable/1912791}.

\bibitem[Guo et~al.(2010)Guo, Ladroue, and Feng]{Guo2010}
Shuixia Guo, Christophe Ladroue, and Jianfeng Feng.
\newblock \emph{Granger Causality: Theory and Applications}, pages 83--111.
\newblock Springer London, London, 2010.
\newblock ISBN 978-1-84996-196-7.
\newblock \doi{10.1007/978-1-84996-196-7_5}.
\newblock URL \url{https://doi.org/10.1007/978-1-84996-196-7_5}.

\bibitem[Seth et~al.(2015)Seth, Barrett, and Barnett]{Seth3293}
Anil~K. Seth, Adam~B. Barrett, and Lionel Barnett.
\newblock Granger causality analysis in neuroscience and neuroimaging.
\newblock \emph{Journal of Neuroscience}, 35\penalty0 (8):\penalty0 3293--3297,
  2015.
\newblock ISSN 0270-6474.
\newblock \doi{10.1523/JNEUROSCI.4399-14.2015}.
\newblock URL \url{https://www.jneurosci.org/content/35/8/3293}.

\bibitem[Khanna and Tan(2020)]{Khanna2020Economy}
Saurabh Khanna and Vincent Y.~F. Tan.
\newblock Economy statistical recurrent units for inferring nonlinear granger
  causality.
\newblock In \emph{International Conference on Learning Representations}, 2020.
\newblock URL \url{https://openreview.net/forum?id=SyxV9ANFDH}.

\bibitem[Kipf et~al.(2018)Kipf, Fetaya, Wang, Welling, and
  Zemel]{kipf_neural_2018}
Thomas Kipf, Ethan Fetaya, Kuan-Chieh Wang, Max Welling, and Richard Zemel.
\newblock Neural {Relational} {Inference} for {Interacting} {Systems}.
\newblock \emph{arXiv:1802.04687 [cs, stat]}, June 2018.
\newblock URL \url{http://arxiv.org/abs/1802.04687}.
\newblock arXiv: 1802.04687.

\bibitem[Glymour et~al.(2019)Glymour, Zhang, and
  Spirtes]{glymour_review_caus_2019}
Clark Glymour, Kun Zhang, and Peter Spirtes.
\newblock Review of causal discovery methods based on graphical models.
\newblock \emph{Frontiers in Genetics}, 10, 2019.
\newblock ISSN 1664-8021.
\newblock \doi{10.3389/fgene.2019.00524}.
\newblock URL
  \url{https://www.frontiersin.org/articles/10.3389/fgene.2019.00524}.

\bibitem[Zhu et~al.(2020)Zhu, Ng, and Chen]{Zhu2020Causal}
Shengyu Zhu, Ignavier Ng, and Zhitang Chen.
\newblock Causal discovery with reinforcement learning.
\newblock In \emph{International Conference on Learning Representations}, 2020.
\newblock URL \url{https://openreview.net/forum?id=S1g2skStPB}.

\bibitem[Huang et~al.(2020)Huang, Zhu, Holloway, and Haidar]{huang_causal_rl}
Xiaoshui Huang, Fujin Zhu, Lois Holloway, and Ali Haidar.
\newblock Causal discovery from incomplete data using an encoder and
  reinforcement learning, 2020.
\newblock URL \url{https://arxiv.org/abs/2006.05554}.

\bibitem[Oliehoek~A. and Amato(2016)]{oliehoek_a_concise_2016}
Frans Oliehoek~A. and Christopher Amato.
\newblock \emph{A {Concise} {Introduction} to {Decentralized} {POMDPs}}.
\newblock Springer Publishing Company, Incorporated, 1st edition, 2016.

\bibitem[Mnih et~al.(2015)Mnih, Kavukcuoglu, Silver, Rusu, Veness, Bellemare,
  Graves, Riedmiller, Fidjeland, Ostrovski, Petersen, Beattie, Antonoglou,
  King, Kumaran, Wierstra, Legg, Hassabis, and Sadik]{mnih_human-level_2015}
Volodymyr Mnih, Koray Kavukcuoglu, David Silver, Andrei~A. Rusu, Joel Veness,
  Marc~G. Bellemare, Alex Graves, Martin Riedmiller, Andreas~K. Fidjeland,
  Georg Ostrovski, Stig Petersen, Charles Beattie, Ioannis Antonoglou, Helen
  King, Dharshan Kumaran, Daan Wierstra, Shane Legg, Demis Hassabis, and Amir
  Sadik.
\newblock Human-level control through deep reinforcement learning.
\newblock \emph{Nature}, 518\penalty0 (7540):\penalty0 529--533, February 2015.
\newblock ISSN 0028-0836, 1476-4687.
\newblock \doi{10.1038/nature14236}.
\newblock URL \url{http://www.nature.com/articles/nature14236}.

\bibitem[Tampuu et~al.(2015)Tampuu, Matiisen, Kodelja, Kuzovkin, Korjus, Aru,
  Aru, and Vicente]{tampuu_multiagent_2015}
Ardi Tampuu, Tambet Matiisen, Dorian Kodelja, Ilya Kuzovkin, Kristjan Korjus,
  Juhan Aru, Jaan Aru, and Raul Vicente.
\newblock Multiagent {Cooperation} and {Competition} with {Deep}
  {Reinforcement} {Learning}.
\newblock \emph{arXiv:1511.08779 [cs, q-bio]}, November 2015.
\newblock URL \url{http://arxiv.org/abs/1511.08779}.
\newblock arXiv: 1511.08779.

\bibitem[Rohrer(2018)]{rohrer_thinking_2018}
Julia~M Rohrer.
\newblock Thinking {Clearly} {About} {Correlations} and {Causation}:
  {Graphical} {Causal} {Models} for {Observational} {Data}.
\newblock \emph{Advances in Methods and Practices in Psychological Science},
  page~16, 2018.

\bibitem[Tank et~al.(2021)Tank, Covert, Foti, Shojaie, and Fox]{Tank_2021}
Alex Tank, Ian Covert, Nicholas Foti, Ali Shojaie, and Emily~B Fox.
\newblock Neural granger causality.
\newblock \emph{{IEEE} Transactions on Pattern Analysis and Machine
  Intelligence}, pages 1--1, 2021.
\newblock \doi{10.1109/tpami.2021.3065601}.
\newblock URL \url{https://doi.org/10.1109%2Ftpami.2021.3065601}.

\bibitem[Kipf and Welling(2017)]{kipf2017semisupervised}
Thomas~N. Kipf and Max Welling.
\newblock Semi-supervised classification with graph convolutional networks.
\newblock In \emph{International Conference on Learning Representations}, 2017.
\newblock URL \url{https://openreview.net/forum?id=SJU4ayYgl}.

\bibitem[Samvelyan et~al.(2019)Samvelyan, Rashid, de~Witt, Farquhar, Nardelli,
  Rudner, Hung, Torr, Foerster, and Whiteson]{samvelyan19smac}
Mikayel Samvelyan, Tabish Rashid, Christian~Schroeder de~Witt, Gregory
  Farquhar, Nantas Nardelli, Tim G.~J. Rudner, Chia-Man Hung, Philiph H.~S.
  Torr, Jakob Foerster, and Shimon Whiteson.
\newblock {The} {StarCraft} {Multi}-{Agent} {Challenge}.
\newblock \emph{CoRR}, abs/1902.04043, 2019.

\bibitem[Savitzky and Golay(1964)]{savitzky_smoothing_1964}
Abraham. Savitzky and M.~J.~E. Golay.
\newblock Smoothing and {Differentiation} of {Data} by {Simplified} {Least}
  {Squares} {Procedures}.
\newblock \emph{Analytical Chemistry}, 36\penalty0 (8):\penalty0 1627--1639,
  July 1964.
\newblock ISSN 0003-2700, 1520-6882.
\newblock \doi{10.1021/ac60214a047}.
\newblock URL \url{https://pubs.acs.org/doi/abs/10.1021/ac60214a047}.

\bibitem[Grimbly et~al.(2021)Grimbly, Shock, and
  Pretorius]{causal-marl-grimbly}
St~John Grimbly, Jonathan~P. Shock, and Arnu Pretorius.
\newblock Causal multi-agent reinforcement learning: Review and open problems.
\newblock \emph{CoRR}, abs/2111.06721, 2021.
\newblock URL \url{https://arxiv.org/abs/2111.06721}.

\end{thebibliography}






\newpage
\appendix
\onecolumn
\makeatletter
\@addtoreset{theorem}{section}
\@addtoreset{assumption}{section}
\section{Proof of Theorem 1}\label{sec:app_proof}
Firstly, let us state Assumption \ref{ass:app_assump_1} and Theorem \ref{theo:app_theo1}:
\begin{assumption}\label{ass:app_assump_1}
\sloppy \textit{Given a set $E=\{o_1,\dots,o_N,r\}$ and for each agent-wise causality factor $c_i$, $c_i(o_i,r) = 1 \implies \exists o_i^{'\leq t}\neq o_i^{\leq t} : g_r(o_1^{\leq t},\dots, o_i^{'\leq t},\dots, o_N^{\leq t},r) \neq g_r(o_1^{\leq t},\dots, o_i^{\leq t},\dots, o_N^{\leq t},r)$ and $c_i(o_i,r) = 0 \implies \forall o_i'^{\leq t}\neq o_i^{\leq t} : g_r(o_1^{\leq t},\dots, o_i^{'\leq t},\dots, o_N^{\leq t},r)=g_r(o_1^{\leq t},\dots, o_i^{\leq t},\dots, o_N^{\leq t},r)$ (with terms as Definition 2).}
\end{assumption}

\begin{theorem}\label{theo:app_theo1}
For a certain MARL task with $N$ agents, if at a given timestep $t$ each agent $i:i \in \{1,\ldots,N\}$ calculates an individual binary causality factor $c_i$, under Assumption \ref{ass:app_assump_1}, where
\begin{equation}\label{eq:app_theo_eq1}
c_i(o_i,r)=\left\{
\begin{array}{ll}
    1 & o_i\ causes\ r\\
    0 & \lnot\ o_i\ causes\ r
\end{array}, i \in \{1,\ldots,N\}
\right.    
\end{equation}
and $o_i$ and $r$ denote the individual observations and the team reward at that timestep for an episode $E$, respectively, and each individual $Q_i$ is updated following the rule
\begin{equation}\label{eq:app_theo_eq2}
Q_i(\tau_i,a_i)=(1-\alpha)Q_i(\tau_i,a_i)+\alpha\left[c_i(\tau_i,r)\times r+\gamma\mathop{\mathrm{max}}_{a_i'}Q_i(\tau_i',a_i')\right]
\end{equation}
where $\alpha$ is the learning rate, then the convergence of the learning Q-function is preserved.
\end{theorem}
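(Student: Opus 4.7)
The plan is to reduce the modified update in Eq.~(\ref{eq:app_theo_eq2}) to a standard tabular Q-learning update operating on a transformed reward signal, and then invoke the classical convergence theorem of Watkins \& Dayan for Q-learning. Concretely, for each agent $i$ I would define $\tilde r_i(\tau_i,r) \coloneqq c_i(\tau_i,r)\cdot r$ and rewrite the recursion as
\begin{equation*}
Q_i(\tau_i,a_i) = (1-\alpha)Q_i(\tau_i,a_i) + \alpha\bigl[\tilde r_i(\tau_i,r) + \gamma\max_{a_i'}Q_i(\tau_i',a_i')\bigr],
\end{equation*}
which is structurally identical to Eq.~(\ref{eq:q_up}) applied to agent $i$ with reward $\tilde r_i$ in place of $r$.

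The first key step is to verify that $\tilde r_i$ is bounded. Since rewards in the Dec-POMDP model of Section \ref{sec:decpomdp} are bounded and $c_i\in\{0,1\}$ by Eq.~(\ref{eq:app_theo_eq1}), we have $|\tilde r_i|\le |r|\le R_{\max}$. Under Assumption~\ref{ass:app_assump_1}, $c_i$ is a deterministic function of $\tau_i$ and $r$, so $\tilde r_i$ is a measurable function of the transition tuple and its conditional distribution given $(\tau_i,a_i)$ is well defined. Consequently the expected-reward operator associated with $\tilde r_i$ is well defined.

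The second step is to show that the Bellman optimality operator $\mathcal{T}_i$ corresponding to the transformed reward, namely $(\mathcal{T}_i Q)(\tau_i,a_i) = \mathbb{E}[\tilde r_i + \gamma\max_{a_i'}Q(\tau_i',a_i')\mid \tau_i,a_i]$, is a $\gamma$-contraction on the space of bounded functions with respect to the supremum norm. This is identical to the proof for the ordinary Bellman operator, since only the additive reward term is modified and the contraction argument depends solely on the $\gamma\max$ term. The third step is then to apply the stochastic-approximation argument of Watkins \& Dayan: under the standard Robbins-Monro conditions on the learning rate ($\sum_t \alpha_t = \infty$, $\sum_t\alpha_t^2 < \infty$) and the usual requirement that every $(\tau_i,a_i)$ pair be visited infinitely often, the iterates of Eq.~(\ref{eq:app_theo_eq2}) converge almost surely to the unique fixed point of $\mathcal{T}_i$, i.e.\ to the optimal Q-function of the MDP with reward $\tilde r_i$.

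The main obstacle, and the point I expect to have to argue carefully, is that \emph{convergence} here is convergence in the Q-learning sense: to the fixed point of the contracted operator associated with $\tilde r_i$, not necessarily to an optimal policy for the original team reward $r$. I would make this distinction explicit, stressing that the theorem claims preservation of convergence of the learning procedure rather than optimality with respect to the original objective, and that Assumption~\ref{ass:app_assump_1} is what justifies interpreting the fixed point as the correctly credit-assigned value function for agent $i$. A secondary subtlety is non-stationarity induced by other learning agents; as in Tampuu et al.'s IDQL, I would handle this in the standard independent-learner fashion, treating the other agents as part of a stationary environment for the purposes of the per-agent convergence statement, which is the same level of assumption under which IDQL itself is analysed.
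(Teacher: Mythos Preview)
Your proposal is correct and follows essentially the same strategy as the paper: both reduce the modified update to ordinary Q-learning on the transformed reward $\tilde r_i = c_i\cdot r$ and then appeal to the standard convergence guarantee for tabular Q-learning. Your version is considerably more rigorous than the paper's own proof, which simply performs the case split on $c_i\in\{0,1\}$ and asserts convergence without explicitly invoking the contraction argument, the Robbins--Monro conditions, or the distinction between convergence of the iterates and optimality with respect to the original team reward that you take care to spell out.
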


\begin{proof}
Consider a multi-agent cooperative scenario with $N$ agents. The Q-function for each agent $i:i\in \{1,\ldots,N\}$ is updated following the rule
\begin{align*}
    Q_i(\tau_i,a_i)=(1-\alpha)Q_i(\tau_i,a_i)+\alpha\left[r+\gamma\mathop{\mathrm{max}}_{a_i'}Q_i(\tau_i',a_i')\right]
\end{align*}
Let $Q_i^c$ represent the Q-function for an agent $i$ after applying its individual causality factor $c_i$ on the reward and $c_i$ is calculated as in Eq. (\ref{eq:app_theo_eq1}) and under Assumption \ref{ass:app_assump_1}. As from Eq. (\ref{eq:app_theo_eq2}), the update can be expanded as follows
\begin{align*}
Q_i^c(\tau_i,a_i)&=\left\{
\begin{array}{ll}
    (1-\alpha)Q_i^c(\tau_i,a_i)+\alpha\left[r+\gamma\mathop{\mathrm{max}}_{a_i'}Q_i^c(\tau_i',a_i')\right] & \tau_i\ causes\ r\\
    \\
    (1-\alpha)Q_i^c(\tau_i,a_i)+\alpha\gamma\mathop{\mathrm{max}}_{a_i'}Q_i^c(\tau_i',a_i') & \lnot\ \tau_i\ causes\ r
\end{array}
\right.
\\
\\
&=\left\{
\begin{array}{ll}
    Q_i^c(\tau_i,a_i;r\neq 0) & \tau_i\ causes\ r\\
    \\
    Q_i^c(\tau_i,a_i;r=0) & \lnot\ \tau_i\ causes\ r
\end{array}
\right.
\end{align*}
It is true that $\lim_{Ts \to \infty}c_i(o_i,r)\in\{0,1\}$ because $c_i(o_i,r)\in\{0,1\}$, where $Ts$ is a number of timesteps, and thus, in the same way that an unchanged Q-function $Q_i \equiv Q_i^c$ when $c_i$ is always 1 (from the above expansion) will converge to a local optimum, also $Q_i^c$ with a valid $c_i$ calculation under Assumption \ref{ass:app_assump_1} will converge to a local optimum $Q_i^{c*}$, i.e., $\lim_{Ts\to\infty}(Q_i^{c*}-Q_i^c)=0$. 
\end{proof}

\section{Individual Behaviours that are Learned with ICL} \label{sec:app_indiv_icl}
In this paper we have introduced ICL as the ground truth used to benchmark the causality predictions made by ACD-MARL. To validate the method, we have demonstrated in section \ref{sec:icmarl} how ICL can solve a set of different environments with different levels of complexity (Fig. \ref{fig:maps}). To support the better performances achieved as a team by this method when compared to simple independent learners, we present here a deeper discussion similar to the one made to analyse the improvements in individual performances when using ACD-MARL. To this end, we have selected trained policies of IDQL and ICL and evaluated the performances of the agents in the environments. This experiment aims to demonstrate how ICL enables independent agents to learn more intelligent behaviours when compared to simple independent learners.

Fig. \ref{fig:add_results}a depicts the number of shots fired by each agent in SMAC-3m and Fig. \ref{fig:add_results}b in SMAC-5m. The reward of these tasks is built based on the damage dealt to opponents and on the winning condition of the game. Thus, to maximise the reward it is important that all the agents participate and deal damage. ICL is capable of training this behaviour, as it can be seen in Fig. \ref{fig:add_results}a and \ref{fig:add_results}b. Although in IDQL some of the agents shoot more often that in ICL, the shooting distributions in ICL are more balanced for both scenarios, suggesting that more agents are cooperating in the task due to a better credit assignment, leading to a faster convergence and stable reward as shown previously in Fig. \ref{fig:results}c and \ref{fig:results}d of the results section. This shows that ICL encourages the individual elements of the team to cooperate more, punishing lazy agents. 

Fig. \ref{fig:add_results}c illustrates the cumulative proximity (euclidean distance) of each agent to the teammates during a successful trained episode of Lumberjacks for ICL and IDQL. Since in this task multiple agents are usually needed to chop a tree down, ICL encourages the agents to learn policies that will make them move always close to each other, making the task easier. As it can be seen in Fig. \ref{fig:add_results}c, in ICL the cumulative distances are much smaller for every agent, meaning that they try to stick close to the teammates while solving the task. On the other hand, for IDQL we can see that the cumulative distance for each agent is much higher, meaning that they are mostly far away from each other during the episode. As a result, this leads to worse performances as a team, failing to achieve optimal rewards as it was discussed in section \ref{sec:results_1}.

For the Predator-Prey task, Fig. \ref{fig:add_results}d shows the existence of non-cooperative agents in IDQL that are eliminated in ICL. With the proposed method it is possible to see not only a higher number of preys being captured but also a more balanced distribution of this number across the agents. This result demonstrates that ICL encourages the agents to participate more in the team objective, leading to a more consistently high reward and faster convergence, as shown previously in the results section, Fig. \ref{fig:results}.

\begin{figure}[!t]
    \centering
    \includegraphics[width=0.5\textwidth]{resources/legend_acd_exps.jpg}
    \\
    \vspace{0.00mm} 
    \subfigure[]{\label{fig:add_res_a}\includegraphics[width=0.23\textwidth]{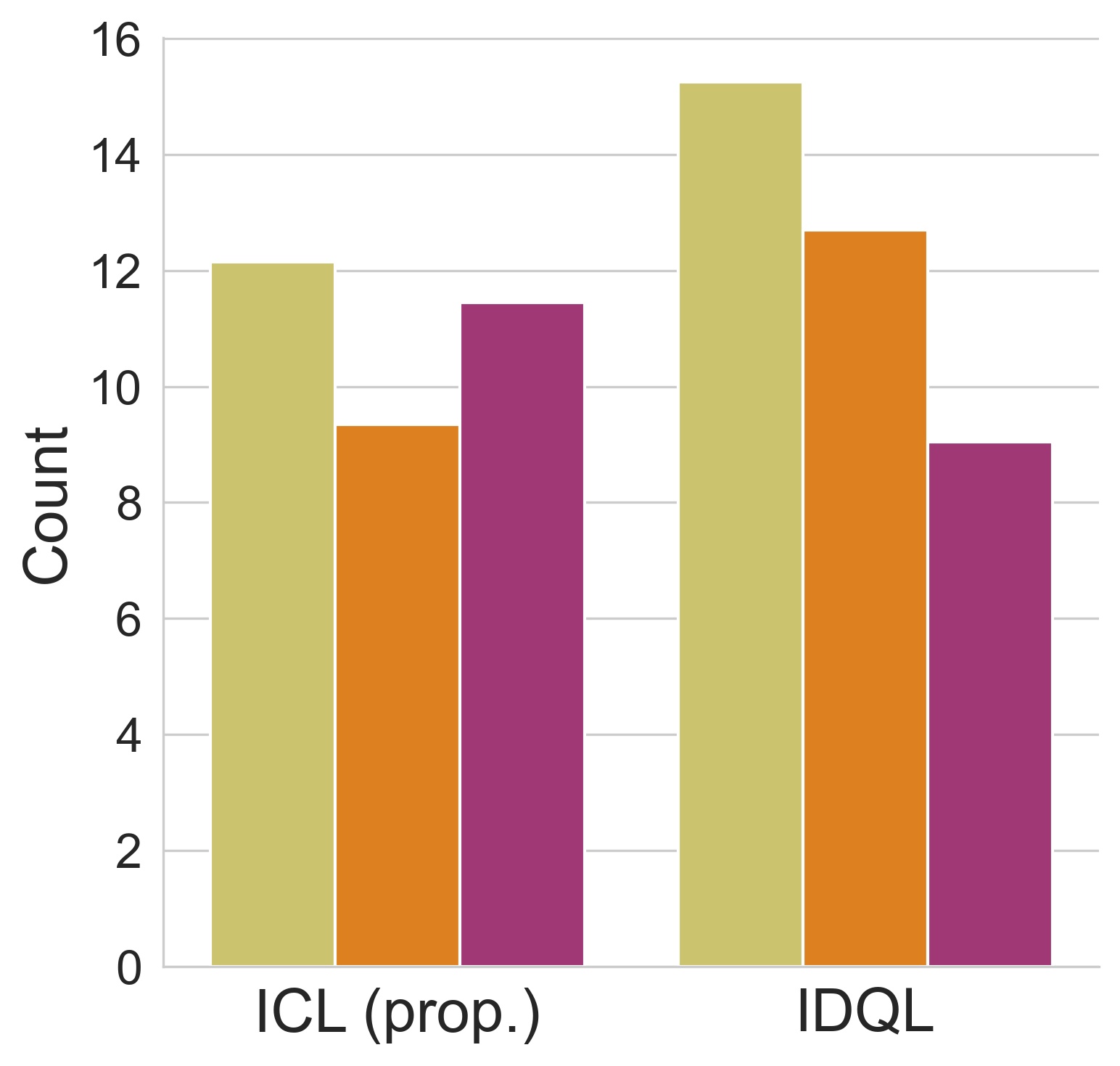}}
    \subfigure[]{\label{fig:add_res_b}\includegraphics[width=0.23\textwidth]{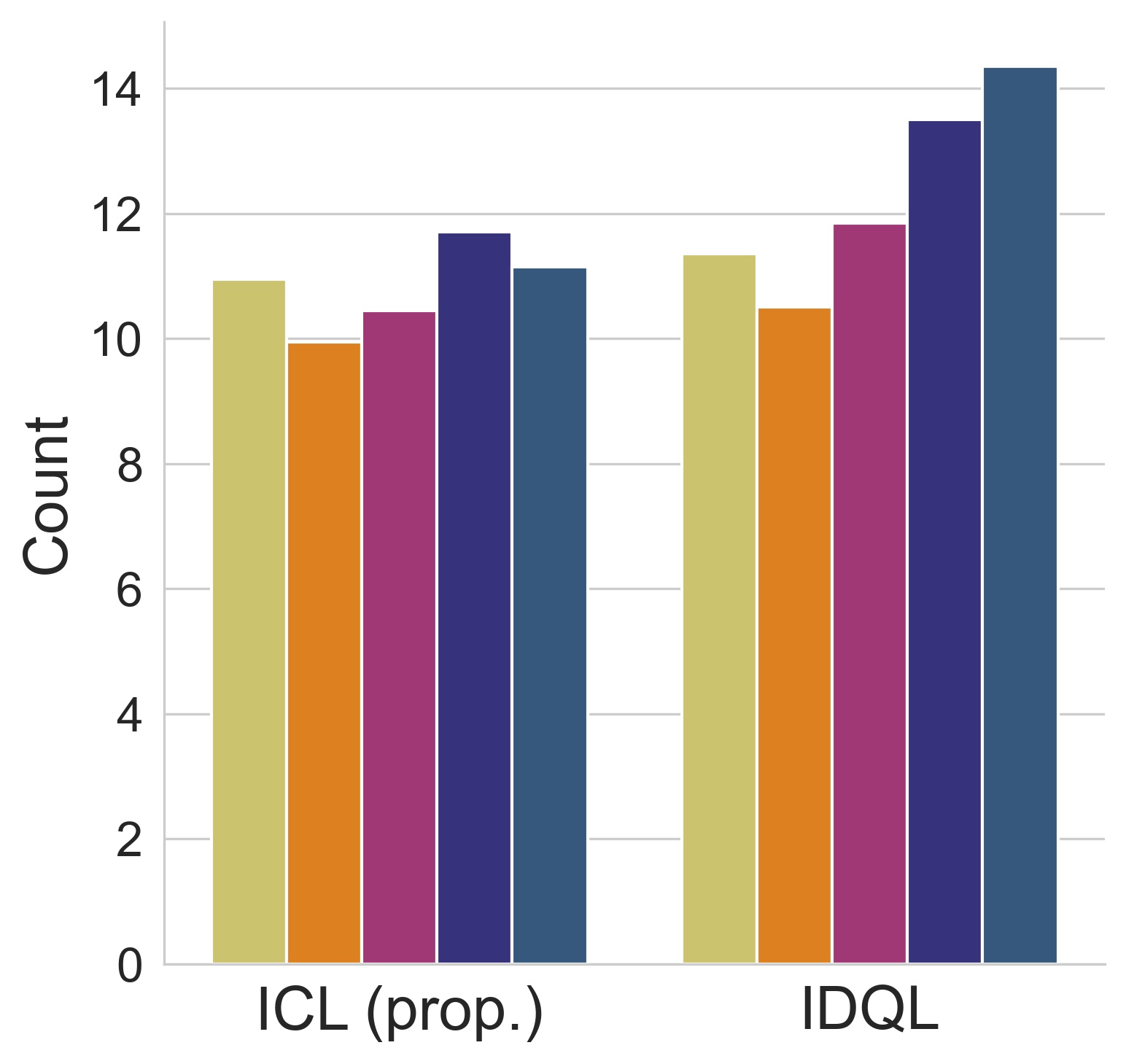}}
    \subfigure[]{\label{fig:add_res_c}\includegraphics[width=0.23\textwidth]{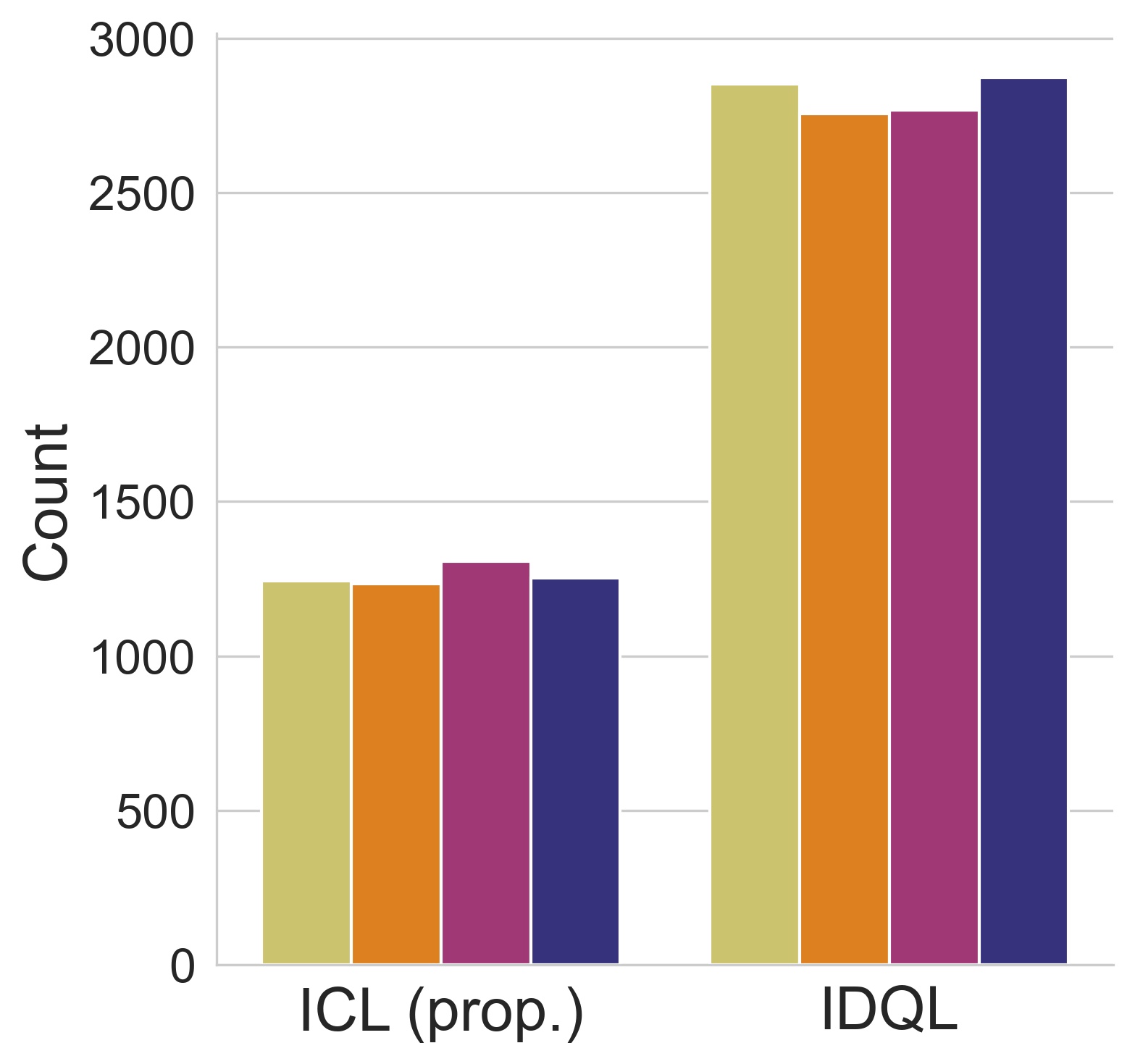}}
    \subfigure[]{\label{fig:add_res_d}\includegraphics[width=0.23\textwidth]{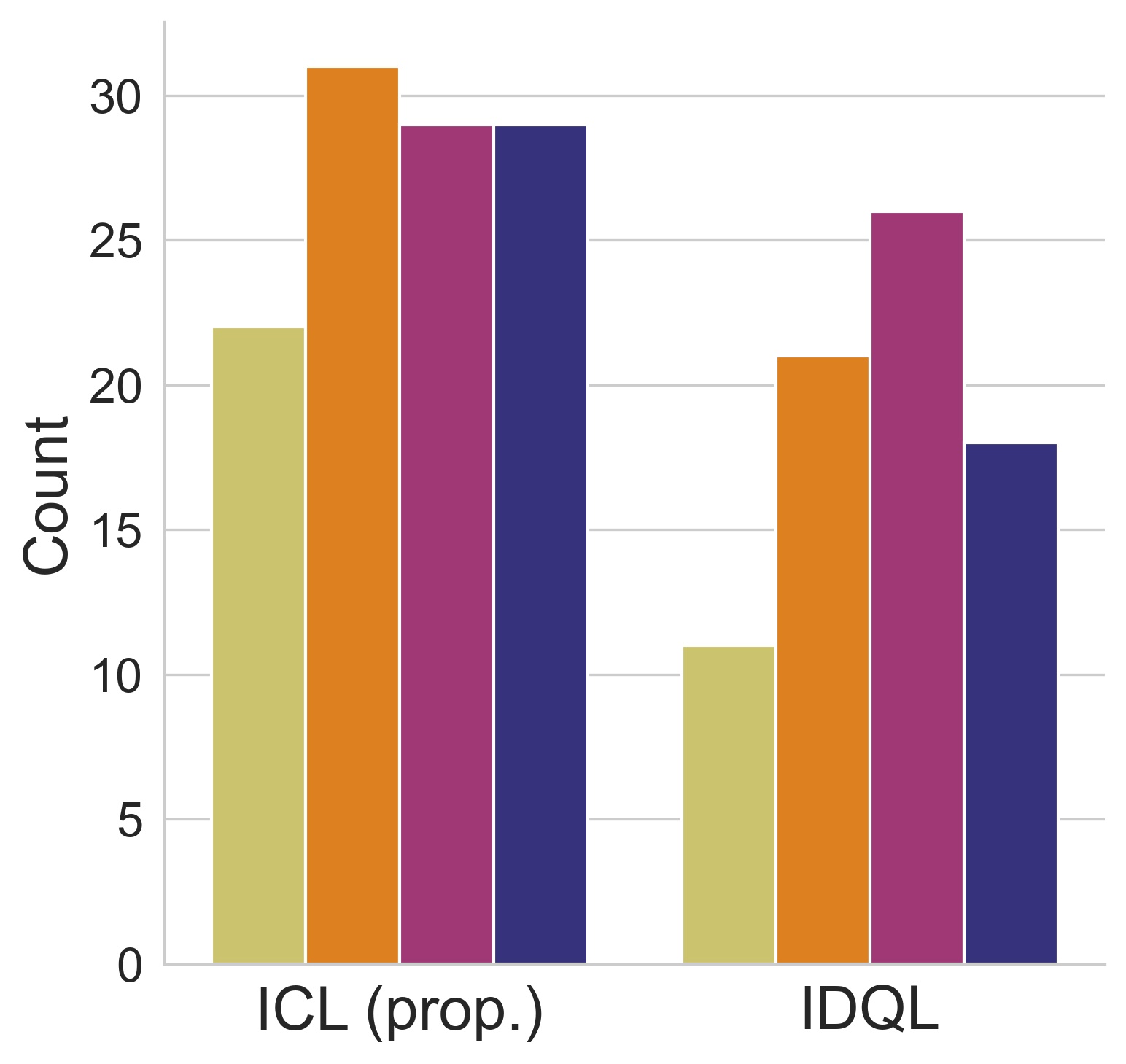}}
    \caption{Behaviour metrics with trained agents for ICL vs IDQL in the experimented environments. (a) 20-episode mean shots fired per agent in SMAC-3m and (b) in SMAC-5m; (c) cumulative distance from each agent to the other agents over one trained episode of Lumberjacks; (d) cumulative preys caught per agent in Predator-Prey (50 episodes).}
    \label{fig:add_results}
\end{figure}

\section{Further Discussions on the used Modified Environments}\label{sec:app_sparser_envs}
As described in section \ref{sec:res_acd}, in order to investigate in detail the impact of causal discovery in MARL with ACD-MARL, we modify the environments described in section \ref{sec:icmarl} to make them contain less positive reward signals. In this sense, Predator-Prey-sp corresponds to the Predator-Prey environment in Fig. \ref{fig:maps}a but with 5 agents and only 1 prey; Lumberjacks-sp corresponds to the Lumberjacks environment in Fig. \ref{fig:maps}b with the same 4 agents but only one tree in the map; SMAC-3m-sp and SMAC-5m-sp correspond to the same environments in Fig. \ref{fig:maps}c and Fig. \ref{fig:maps}d (3 and 5 agents, respectively) but the agents play against only one enemy. While the global goal of the environments in this case becomes easier for the team, the resulting behaviours of the individual agents will be much more affected.

As demonstrated in this paper, it is important to note that being able to solve a task as a team does not necessarily translate to the emergence of good cooperative individual behaviours by every agent in the team. For instance, in the experiments in this paper, we have demonstrated that Independent Deep Q-learning (IDQL) can solve most of the environments but the individual behaviours learned are not as good from a cooperative point of view (as discussed in section \ref{sec:results} and complemented in the appendix \ref{sec:app_indiv_icl} in Fig. \ref{fig:add_results}). In the environments described previously in section \ref{sec:meth}, the nature of the tasks imposed requires a larger number of agents to solve the environments. However, in the modified environments, part of the agents might not be needed for the team to accomplish the goal. Logically, this will have a strong negative impact in the emergent behaviours of the agents, making them more prone to develop lazy behaviours. This new challenge opens possibilities of using types of transfer learning to boost the learning process. For instance, if we would train a set of agents in an environment that they can solve quickly, but not all of them are needed (for instance, in the case of SMAC-5m-sp), then they could learn it in a short amount of time and use the learned knowledge to improve learning in a more complex related scenario where all of them must help. However, if the agents develop lazy behaviours and do not learn cooperative policies, they will not be able to help the team when placed together with other agents in a different scenario. On the other hand, if the agents can learn cooperative behaviours, even if they are not necessarily needed, they will still learn how they can cooperate with others and help the team. 

As a simple additional example, we have trained QMIX in SMAC-5m-sp. Interestingly, QMIX fails to solve this environment. In Fig. \ref{fig:5m_vs_1m_qmix} we can see that it shows a very unstable performance, meaning that the agents get very confused in this type of environment. As a result, it fails to achieve the optimal performance that ACD-MARL can achieve (optimum in the figure). This suggests that this type of environments creates a different kind challenge that will make it very confusing for the agents to learn optimal behaviours, failing to cooperate and win the game.
\begin{figure}[H]
    \centering
    \includegraphics[width=0.3\textwidth]{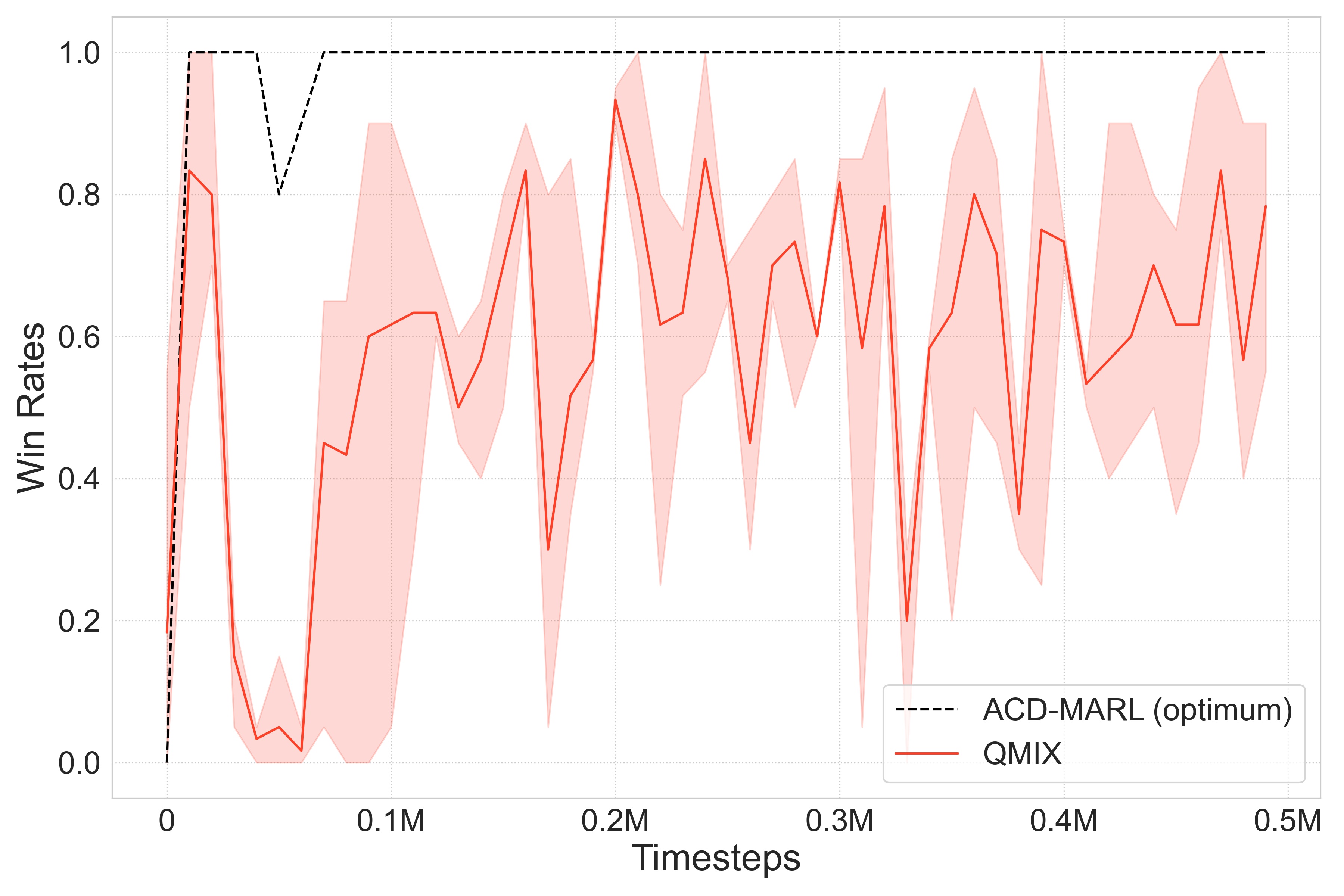}
    \caption{Win rate achieved by QMIX (average of 3 independent runs) in SMAC-5m-sp compared to the optimal win rate achieved by ACD-MARL. QMIX fails to learn this environment consistently.}
    \label{fig:5m_vs_1m_qmix}
\end{figure}

\section{Corresponding Win Rates for the SMAC Environments Presented in the Results Section}\label{sec:app_win_rates}
In section \ref{sec:results} we presented the obtained rewards for a diverse set of environments. Since we use more than only SMAC environments, we have opted to include only the rewards to keep consistency of the presentation of results. Nevertheless, for completeness we include in this appendix the corresponding win rates for the SMAC environments SMAC-3m and SMAC-5m.
\begin{figure}[H]
    \centering
    \includegraphics[width=0.3\textwidth]{resources/legend.jpg}
    \\
    \vspace{0.00mm} 
    \subfigure[SMAC-3m]{\label{fig:app_wr_1}\includegraphics[width=0.23\textwidth]{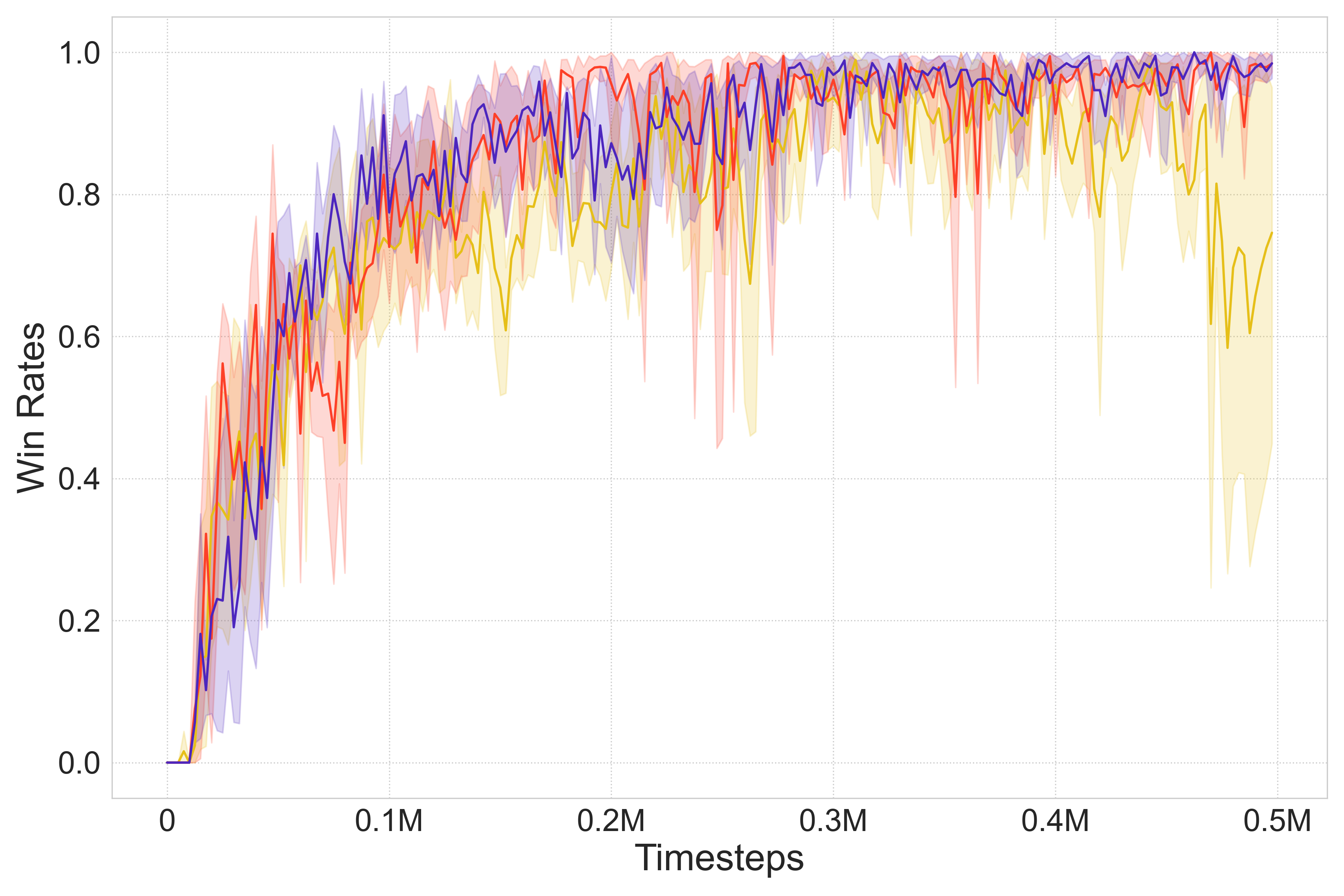}}
    \hspace{4mm}
    \subfigure[SMAC-5m]{\label{fig:app_wr_2}\includegraphics[width=0.23\textwidth]{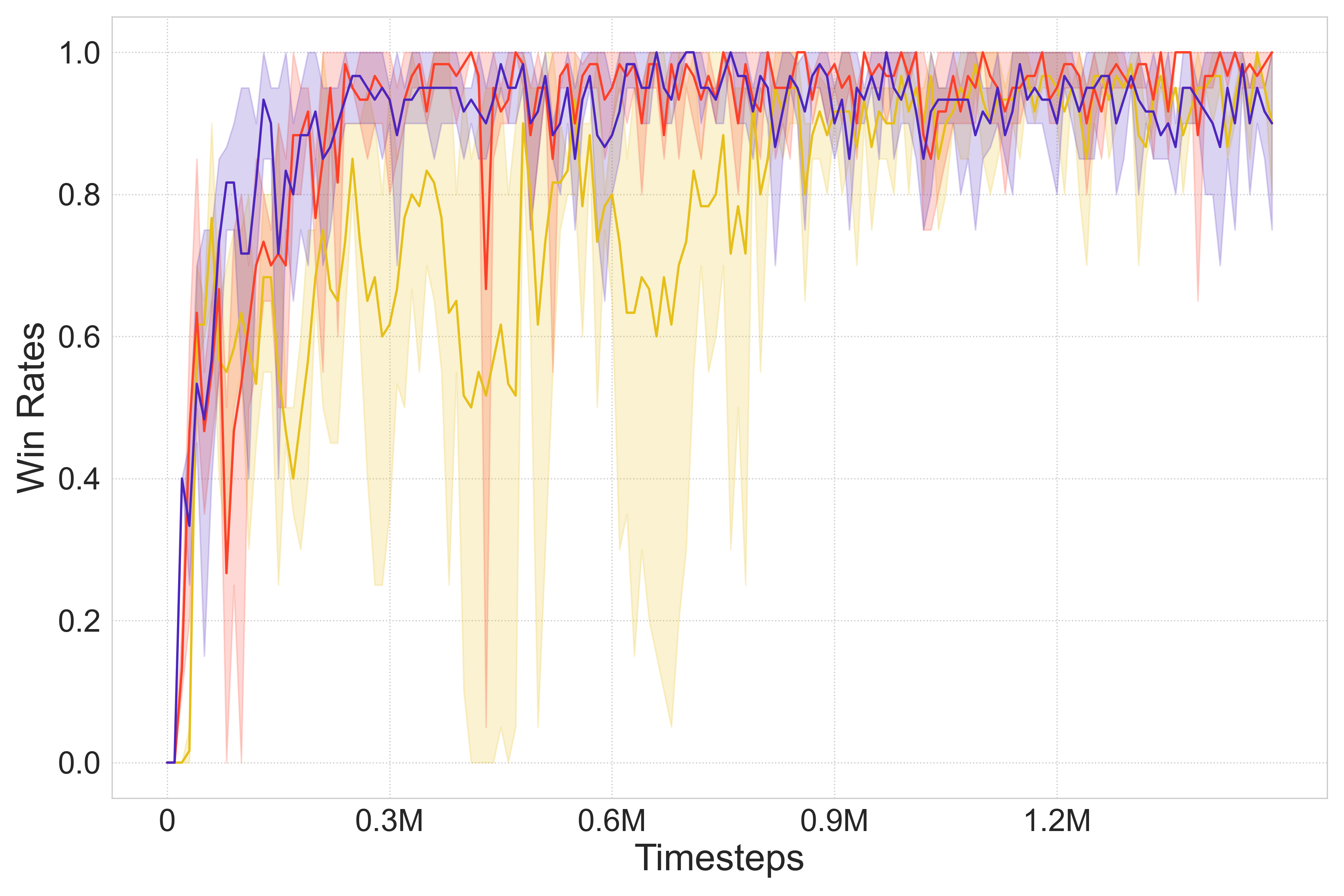}}
    \caption{Corresponding win rates achieved in SMAC-3m and SMAC-5m by the experimented methods as discussed in section \ref{sec:results}.}
    \label{fig:app_win_rates_fig}
\end{figure}

\section{Illustration of Parameter Sharing vs No Parameter Sharing Configurations}\label{sec:app_param_share_vs_no_param_share}
Since we have stated that we opt to use independent networks for the agents instead of using the parameter sharing convention (except for QMIX), we give an overview of the two options for clarity and completeness purposes. Figure \ref{fig:param_share} illustrates a high-level representation of the differences between parameter sharing and no parameter sharing architectures.  
\begin{figure}[H]
    \centering
    \includegraphics[width=\textwidth]{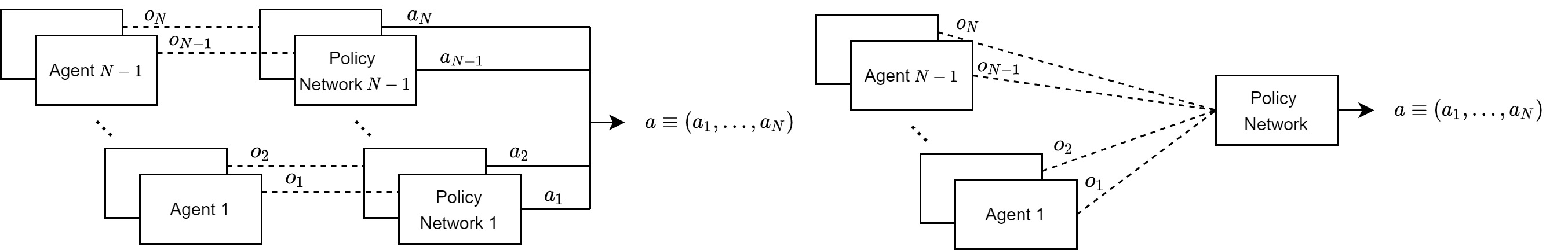}
    \caption{Overview of no parameter sharing (left) vs the use of parameter sharing, as methods like QMIX do, for example (right). Without parameter sharing agents use independent networks to output individual actions, given observations.}
    \label{fig:param_share}
\end{figure}

\section{Implementation Details and Hyperparameters Used in the Experiments}\label{sec:app_impl_details}
In the experiments with Multi-Agent Reinforcement Learning, the methods were trained for a different number of steps depending on the complexity of the environment (as shown in the Figures of the main paper), but all the methods are trained for 6 independent runs in each one of the environments. The evaluation cycle (interval at which the models are evaluated) also varies with the environment due to the varying number of training steps (2500 for SMAC-3m, and 10000 for Predator-Prey, Lumberjacks, and SMAC-5m).

In the implementation of the algorithms used, all the agents are controlled by recurrent deep neural networks that use a GRU (gated recurrent unit) cell with width 64. In the case of QMIX \cite{rashid_qmix_2018}, the agents always share the same policy network, but in IDQL, ICL and ACD-MARL each agent uses a separate independent policy network. Also in the implementation of QMIX, all the hidden layers used by the mixing network architecture have 32 units. 

The target networks are all updated every 200 training episodes. The optimiser used to train the networks of all the methods is the RMSprop optimiser, with learning rate ($\alpha$) set to $5\times 10^{-4}$. The discount factor ($\gamma$) used has value 0.99. The size of the replay buffer is set to 5000 and the minibatches sampled have a maximum size of 32 episodes. The exploration-exploitation tradeoff of the agents follows the epsilon-greedy method. The value of epsilon starts in 1 and anneals down to a minimum of 0.05 over 50000 training episodes.

In the experiments with Amortized Causal Discovery \cite{lowe_amortized_2022}, we start by saving winning episodes from the environments experimented. The samples saved are composed by sets of individual $N$ individual observations and the team reward. The episodes are processed as described in the main paper, and then given to the ACD framework. Note that each episodes contains $N$ series of observations (1 per agent) plus one series for the team reward, resulting in a total of $N+1$ series per episode. The episode length for Predator-Prey and Lumberjacks is 100, for SMAC-5m is 70, and for SMAC-3m is 60.

Regarding the hyperparameters of the ACD framework, we train the model for 150 epochs for each set of samples from each environment in separate and independently, and define a batch size of 128. For the lower bound loss used mentioned in the main paper, we define the prior of the KL divergence as a uniform categorical distribution to predict the causal relations. For the reconstructed term, we estimate the error using the equation \cite{kipf_neural_2018}
$$
-\sum_i\sum_{j}^t\frac{\left(y_i^j-y_i'^j\right)^2}{2\times\sigma ^2} + \text{const}
$$
where $y$ and $y'$ represent the predicted values and the targets, respectively, and $\sigma$ is a variance term. We have set the variance to $5\times10^{-4}$ for Predator-Prey and Lumberjacks and $5\times10^{-3}$ for SMAC-3m and SMAC-5m. Finally, we have used an RNN-based decoder to predict the next steps in our experiments.

To apply the predicted causal relations from ACD in MARL with ACD-MARL, we used the same MARL hyperparameters mentioned above in this supplementary section.

\end{document}